\newenvironment{proofsketch}{%
  \proof}{\endproof}
\tikzset{snake it/.style={decorate, decoration=snake}}
\newcommand{\subo}{\textsc{SUBo-GFN}}
\newcommand{\subof}{\textsc{SUBo-GFN-F}}
\newcommand{\blackdiamond}{\rotatebox[origin=c]{45}{$\blacksquare$}}
\newcommand{\bigbullet}{\tikz\fill (0,0) circle (0.11cm);} 
\theoremstyle{plain}
\newtheorem{theorem}{Theorem}[section]
\newtheorem{proposition}[theorem]{Proposition}
\newtheorem{lemma}[theorem]{Lemma}
\newtheorem{corollary}[theorem]{Corollary}
\theoremstyle{definition}
\newtheorem{definition}[theorem]{Definition}
\newtheorem{assumption}[theorem]{Assumption}
\newtheorem{property}[theorem]{Property}
\theoremstyle{remark}
\icmltitlerunning{Signal from Structure: Exploiting Submodular Upper Bounds in Generative Flow Networks}
\begin{document}

\twocolumn[

\icmltitle{Signal from Structure: Exploiting Submodular Upper Bounds in\protect\\Generative Flow Networks}



  \icmlsetsymbol{equal}{*}

  \begin{icmlauthorlist}
    \icmlauthor{Alexandre Larouche}{ul,mila}
    \icmlauthor{Audrey Durand}{ul,mila,cifar}
  \end{icmlauthorlist}

  \icmlaffiliation{ul}{Département d'informatique et de génie logiciel, Université Laval, Québec, Canada}
  \icmlaffiliation{mila}{Mila - Quebec AI Institute, Montréal, Canada}
  \icmlaffiliation{cifar}{Canada CIFAR AI Chair}
  \icmlcorrespondingauthor{Alexandre Larouche}{allar145@ulaval.ca}

  \icmlkeywords{GFlowNets, Submodularity, Optimism, Reinforcement Learning}

  \vskip 0.3in
]



\printAffiliationsAndNotice{}  


\begin{abstract}
Generative Flow Networks (GFlowNets; GFNs) are a class of generative models that learn to sample compositional objects proportionally to their a priori unknown value, their reward. We focus on the case where the reward has a specified, actionable structure, namely that it is submodular. We show submodularity can be harnessed to retrieve upper bounds on the reward of compositional objects that have not yet been observed. We provide in-depth analyses of the probability of such bounds occurring, as well as how many unobserved compositional objects can be covered by a bound. Following the Optimism in the Face of Uncertainty principle, we then introduce \subo{}, which uses the submodular upper bounds to train a GFN. We show that \subo{} generates \emph{orders of magnitude} more training data than classical GFNs for the same number of queries to the reward function. We demonstrate the effectiveness of \subo{} in terms of distribution matching and high-quality candidate generation on synthetic and real-world submodular tasks.
\end{abstract}


\section{Introduction}

Sampling from hard-to-approximate distributions has been a long-studied problem~\citep{metropolis1953equation}. Some difficulties are aggravated when sampling objects from large combinatorial spaces, e.g. in drug discovery~\citep{shenTacoGFNTargetConditioned2023} and protein design~\citep{jainBiologicalSequenceDesign2022}, namely the generation of said objects. 
Fortunately, many problems allow for the objects to be composed through a sequence of steps, resulting in compositional objects. In the specific setting where one aims to sample compositional objects proportionally to their unknown \emph{value} (the reward), Generative Flow Networks (GFlowNets, or GFNs)~\citep{bengioFlowNetworkBased2021, bengioGFlowNetFoundations2023} were recently introduced as a framework to approximate the target sampling distribution through sequential, episodic learning.

The success of GFNs is partly driven by their ability to sample a diverse set of candidates in large combinatorial spaces~\citep{zhangLetFlowsTell2023, bengioFlowNetworkBased2021}, paving the way for their application in many domains, such as molecule generation~\citep{koziarskiRGFNSynthesizableMolecular2024a}, sensor selection~\citep{evmorfosGflownetsSensorSelection2023}, and scheduling~\citep{zhangRobustSchedulingGFlowNets2022}. 
A growing body of works study how 
GFNs can be improved by relying on properties of the compositional objects such as symmetry~\citep{maBakingSymmetryGFlowNets2023} and ordering~\citep{chenOrderPreservingGFlowNets2023}, or domain specific knowledge~\citep{koziarskiRGFNSynthesizableMolecular2024a, seoGenerativeFlowsSynthetic2024, mistalCrystalGFNSamplingMaterials2023}. However, none focus on how the structure of the reward, the  most elementary constituent of GFNs, can be exploited.

So far, GFNs have been primarily studied in problems with arbitrary reward structure~\citep{zhangLetFlowsTell2023, koziarskiRGFNSynthesizableMolecular2024a, zhangRobustSchedulingGFlowNets2022, bengioFlowNetworkBased2021}. In this work, we consider reward functions that are submodular set functions. Submodularity encodes the \emph{diminishing return} property, where adding an element to a small set must produce an increase in value no less than that of adding the element to one of its superset. This structure naturally occurs in many real-world problems, for example in genetics~\citep{gasperini2019genome}, sensor selection~\citep{evmorfosGflownetsSensorSelection2023, shamaiahGreedySensorSelection2010} and influence maximization~\citep{domingosMiningNetworkValue2001, zhangSequentialDecisionBased2025}. We investigate how the submodular structure of the reward function can be leveraged by GFNs to improve their estimate of the target sampling distribution.
\paragraph{Contributions} We first show that the submodularity of the reward function can be harnessed to retrieve upper bounds on the reward of compositional objects that have not yet been observed. We provide an in-depth theoretical analysis of the probability of obtaining such bounds, as well as how many unobserved objects can be covered by a bound. Following the \emph{Optimism in Front of Uncertainty} (OFU) principle~\cite{brafmanRMAXGeneralPolynomial2002}, we then introduce \subo{}, an approach that uses the submodular upper bounds for training a GFN. 
We find that \subo{} generally produces \emph{orders of magnitude} more learning signals per query to the reward function than regular GFNs. Furthermore, we theoretically investigate the impact of being optimistic (using upper bounds) on the learned distribution. Finally, we demonstrate the effectiveness of \subo{} on synthetic and real-world submodular tasks, where we discuss its practical implications for both distribution matching and the generation of diverse high-quality candidates.

\section{Related Works}
\label{sec:related_works}

\paragraph{Optimism in the Face of Uncertainty} Our work relies on learning from upper bounds on the reward of unobserved objects (i.e. an optimistic estimate). As such, there is a deep connection with the \emph{Optimism in the Face of Uncertainty} (OFU) principle, which states that when uncertain about the value of an object, one should assume the best case scenario. OFU is commonly used in bandits such as the Upper Confidence Bound (UCB) family of algorithms~\citep{auerFinitetimeAnalysisMultiarmed2002, liContextualbanditApproachPersonalized2010, zhouNeuralContextualBandits2020}. 

\paragraph{Efficient GFNs}
Efficiency in GFNs has been studied from two main angles. The first focuses on being efficient w.r.t. queries to the reward function, by optimizing which candidate should be evaluated~\cite{zhu2023sample, kim2024genetic}. The other approach is to leverage assumptions on the reward, such as differentiability~\citep{liu2025efficient}, to improve the learning signal provided to the GFN. Our work exploits how the reward is structured, but the efficiency arises from generating multiple learning signals from gathered observations.


\paragraph{Improving GFN Exploration} Many works have focused on improving the exploration mechanisms in GFNs. Some, such as Generative Augmented Flow Networks~\citep{panGenerativeAugmentedFlow2022} and Sibling-Augmented GFNs~\citep{madanImprovingExplorationSibling2024} implement intrinsic motivation as an exploration mechanism. Local Search GFNs~\citep{kimLocalSearchGFlowNets2023} instead integrate a local search heuristic that backtracks from generated objects to find nearby improvements to candidate solutions. Finally, Thompson Sampling has been studied as a mechanism to guide exploration in GFNs~\citep{rector-brooksThompsonSamplingImproved2023a}. The approaches attempting to improve exploration should be seen as complementary to our work, as they are compatible. However, specifying a reward structure allows us to provide theoretical analyses which pave the way towards problem dependent data-efficiency and exploration mechanisms.

\paragraph{GFNs with Intermediate Learning Signal} 
Traditional GFN settings typically assume that only the value of fully-defined compositional objects can be observed~\cite{bengioFlowNetworkBased2021,bengioGFlowNetFoundations2023}. We consider a relaxed setting where the value of intermediate objects in-the-making can be observed as well. This relaxed setting has been considered in Forward-Looking strategies for GFNs~\citep{panBetterTrainingGFlowNets2023, zhangLetFlowsTell2023, zhangDiffusionGenerativeFlow2023,senderaImprovedDiffusionSampler2024}, which use intermediate values to improve local credit assignment in a attempt to mitigate sparse and delayed learning signals.
Similarly, LED-GFNs~\citep{jangLearningEnergyDecompositions2023} learn to decompose energy functions dependent on the reward to better assign credit. Crucially, these methods leverage intermediate signals to improve learning in the current trajectory. They do not exploit the structural properties of these rewards to infer information about unobserved objects. We argue that intermediate learning signal in structured environments can be used beyond credit assignment, as a source of (quasi-)free data augmentation for global exploration.

\section{Problem Setting}
\label{sec:problem}

We consider the problem of sampling compositional objects with probability proportional to their value. Formally, let $(\mathcal{S}, \mathcal{A})$ be a tuple describing a directed acyclic graph (DAG), where $\mathcal{S}$ is called the state-space and $\mathcal{A}$ is called the action-space. A compositional object is built by sequentially selecting elements from $\mathcal A$. A state $s \in \mathcal S$ denotes the set of elements describing an object, and an action $a\in\mathcal A$ corresponds to the addition of element $a$ to $s$. We assume a finite action-space with $|\mathcal{A}|=N$ and denote $\mathcal A(s) := \mathcal A \setminus s$ the elements that are not yet contained in the state $s$. The action $a\in\mathcal A(s)$ causes a transition from state $s$ to $s' = s \cup \{a\}$, denoted $s \rightarrow s^\prime$.  
The \emph{terminating} state-space $\mathcal{X} := \left\{s \in \mathcal{S} : |s| = C\right\}$ contains all states of cardinality $C \in \mathbb{N}_{>0}$.
A \emph{trajectory} corresponds to a sequence of states obtained by sequentially adding elements to the \emph{initial state} $s_0 := \emptyset$ until the cardinality constraint $C$ is met, that is until a terminating state $x\in\mathcal X$ is encountered. We use $\mathcal{T}$ to denote the set of all possible trajectories.


The \emph{reward} function $R:\mathcal{S} \mapsto \mathbb{R}_{>0}$ assigns a value to each state. The objective is to sample trajectories ending in $x \in \mathcal{X}$ with probability $R(x) / Z$, where $Z = \sum_{x \in \mathcal{X}} R(x)$ is the \emph{partition function}. We focus on the setting where the reward function $R$ is a submodular set function.

\begin{assumption}[Submodular reward function]
  \label{ass:submod}
  For any states $s, s^\prime \in \mathcal{S}$ such that $s \subset s^\prime$, and any action $a \in \mathcal{A}(s^\prime)$, $R(s \cup \{a\}) - R(s) \geq R(s^\prime \cup \{a\}) - R(s^\prime)$.
\end{assumption}
Submodularity models diminishing returns (i.e., the added value of an element to a state decreases as the size of the state increases). 
Figure~\ref{fig:gfn_dag} illustrates an example of a DAG with a submodular constraint on the reward function.

\begin{figure}[ht]
  \begin{center}
\begin{tikzpicture}
  \tikzset{vertex/.style = {circle, draw, minimum size=1cm}}
  \tikzset{edge/.style = {->,> = latex}}

  \node[vertex] (s0) at (0, 0) {$s_0 = \emptyset$};
  \node[vertex] (sa) at (1.5, 0.75) {$\{a\}$}; 
  \node[vertex] (sab) at (3, 0.75) {$\{a,b\}$}; 
  \node[vertex] (sabc) at (5, 0) {$\{a,b,c\}$}; 
  \node[vertex] (sb) at (1.5, -0.75) {$\{b\}$}; 
  \node[vertex] (sbc) at (3, -0.75) {$\{b,c\}$}; 

  \draw[edge ] (s0) -- (sa);
  \draw[edge] (sa) -- (sab);
  \draw[edge] (sab) -- (sabc);
  \draw[edge] (s0) -- (sb);
  \draw[edge] (sb) -- (sbc);
  \draw[edge] (sbc) -- (sabc);

  \tikzset{
    ineqbox/.style = {
      draw=blue!70!black,
      fill=blue!5,
      rounded corners,
      thick,
      inner sep=6pt,
      align=left
    },
    ineqtitle/.style = {
      fill=blue!70!black,
      text=white,
      font=\bfseries\small,
      inner xsep=4pt,
      inner ysep=2pt
    }
  }

  \node[ineqbox] (ineq) at (2.5,-2.75) {%
    $R(\{a\}) - R(\emptyset) \ge R(\{a,b,c\}) - R(\{b,c\})$\\[2pt]
    $R(\{b\}) - R(\emptyset) \ge R(\{a,b\}) - R(\{a\})$\\[2pt]
    $R(\{b,c\}) - R(\{b\}) \ge R(\{a,b,c\}) - R(\{a,b\})$
  };

  \node[ineqtitle, anchor=south west] at (ineq.north west) {Submodular constraints};

\end{tikzpicture}
  \end{center}
  \caption{An example of a DAG with submodular reward. States are sets of elements, actions add an element to the current state. Submodularity constrains the reward of states, ensuring that there is a diminishing return over time for the reward of later states.} 
  \label{fig:gfn_dag}
 \end{figure}

\subsection{Generative Flow Networks}
\label{sec:gfn}
Generative Flow Networks (GFNs, \citet{bengioFlowNetworkBased2021,bengioGFlowNetFoundations2023}) are a class of methods for sampling compositional objects proportionally to their reward. GFNs learn to distribute the \emph{flow} $F$ induced by the rewards at the terminating states through the DAG such that the incoming and outgoing flows match at every vertex of the DAG. That is, $F(x) = R(x)$, while the flows at intermediate states $F(s)$ and through edges $F(s\rightarrow s^\prime)$ is to be determined by the GFN by balancing \emph{flow matching} equations in the DAG. 

The forward and backward policies $P_F(s^\prime|s)$ and $P_B(s|s^\prime)$ are key components of GFNs, modelling a distribution over transitions between states $s\rightarrow s^\prime$. These policies may be parameterized directly or defined in terms of the flow function $F$ over states and transitions
. A common approach to train a GFN is to start $P_F$ at the initial state $s_0$ and let it generate a sequence of transitions (i.e., a trajectory) until it stops in a terminating state $x \in \mathcal{X}$. Similarly, starting from a terminating state $x \in \mathcal{X}$, $P_B$ can generate a trajectory until it is forced to stop when reaching $s_0$.


 These trajectories are then used to compute a \emph{loss} derived from various flow matching equations~\citep{malkinTrajectoryBalanceImproved2022, deleuBayesianStructureLearning2022a, panBetterTrainingGFlowNets2023} 
 in which the rewards are the \emph{learning signals}. As in several prior works~\citep{panBetterTrainingGFlowNets2023, zhangDiffusionGenerativeFlow2023, zhangLetFlowsTell2023, senderaImprovedDiffusionSampler2024}, we assume rewards at intermediate states in a trajectory can be observed, in addition to rewards at terminating states. 



\section{Submodular Upper Bounds}
\label{sec:bounds}

In this section, we provide theoretical guaranties to characterize the upper bounds that can be obtained given a finite number of trajectories sampled from a DAG with a submodular reward function~(Assumption~\ref{ass:submod}). More specifically, we provide lower bounds on : i) the expected number of upper bounds that can be obtained for any given terminating states; ii) the probability of obtaining at least one upper bound at any given terminating state; and iii) the expected number of terminating states that possess an upper bound.

\begin{property}[Submodular upper bounds]
\label{obs:subo}
Assuming a submodular reward function $R$ (Assumption~\ref{ass:submod}), we can upper-bound the reward at terminating state $x \in \mathcal{X}$ given an action $a \in x$ and an intermediate state $s \subset x \setminus \{a\}$:
\begin{equation*}
    \begin{split}
      R(x) &\leq \operatorname{UB}(x| s,a) := R(s\cup \{a\}) - R(s)+ R(x\setminus \{a\}).
    \end{split}
    \label{eq:bound}
\end{equation*}
\end{property}
 Because the marginal gain of adding element $a$ diminishes as the set $s$ grows, the value of adding $a$ to a small intermediate set $s$ is an overestimate (upper bound) of adding $a$ to the final set $x$ that becomes tighter as $s$ becomes closer to $x$. 
 In what follows, $\operatorname{UB}(x)$ denotes an arbitrary upper bound on $R(x)$, and $\operatorname{UB}(x)$ can be arbitrarily loose.
 
%
Computing upper bounds with Property~\ref{obs:subo} requires a \textit{dataset} $\mathcal D$ of previously visited states $s \in \mathcal{S}$ along with their associated reward $R(s)$. To study the rate at which these upper bounds are generated, we assume the following:
\begin{assumption}[Uniform trajectory collection]
\label{ass:uniform_pf}
We assume that the dataset $\mathcal D$ of pairs $(s, R(s))$ is produced by a collection policy that samples trajectories uniformly.
\end{assumption}
%
Assumption~\ref{ass:uniform_pf} can be relaxed to any sampling policy that samples a fraction of trajectories uniformly.
We hypothesize that as long as the probability of sampling each trajectory is non-zero, the results below still hold (at lower convergence rates), as our experiments show. Detailed derivations and proofs are deferred to Appendix~\ref{app:counting_and_proofs}.

\subsection{Generating Upper Bounds}
\label{sec:distinct_bounds}


We denote $p_{x,a} := x \setminus \{a\}$ the \emph{parent} of terminating state $x$ from which element $a$ is removed. Note that since $|x| = C, \forall x\in\mathcal{X}$, there are $C$ parents per terminating state. Computing an upper bound on $R(x)$ (Property~\ref{obs:subo}), given that $x$ has never been visited, requires prior observations of rewards in a parent state $p_{x,a}$, some intermediate state $s \subset p_{x,a}$, and \textit{complementary} state $s \cup \{a\}$. Since states $p_{x,a}$ and $s \cup \{a\}$ cannot be encountered in the same trajectory, computing $\operatorname{UB}(x)$ requires at least two prior trajectories
. Up to three trajectories can be combined to generate an upper bound (one for each required state). 
To simplify the analysis, we limit ourselves to upper bounds arising strictly from trajectory pairs instead of triplets.
We establish the criteria for these trajectories in the following definitions.

\begin{definition}[Parent trajectories]
  \label{def:alpha}
  Given a terminating state $x$ and its parent $p_{x,a}$, 
  we denote $\mathcal{T}(p_{x,a})$ the set of trajectories passing through a parent of $x$ and ending in a terminating state $x^\prime \neq x$. Note that the number of parent trajectories $|\mathcal{T}(p_{x,a})|$ is identical for all $x$ and $a$.
\end{definition}
\begin{definition}[Compatible trajectories]
  \label{def:beta}
  Given a terminating state $x$ and its parent $p_{x,a}$ (Definition~\ref{def:alpha}), we denote $\tilde{\mathcal{T}}(p_{x,a})$ the set of trajectories passing through a state $s \subset p_{x,a}$ transitioning to $s\cup \{a\}$, and ending in terminating state a terminating state $x^\prime \neq x$. These trajectories are said to be \emph{compatible} with those in $\mathcal{T}(p_{x,a})$ as they jointly result into an upper bound on $R(x)$. Note that the number of compatible trajectories is identical for all $x$ and $a$.
\end{definition}
%
Figure~\ref{fig:alpha_beta} illustrates an example of trajectories fitting each definition. Also notice that two distinct pairs of trajectories can result in the same upper bound for the reward of a terminating state, as they can share some intermediate states.


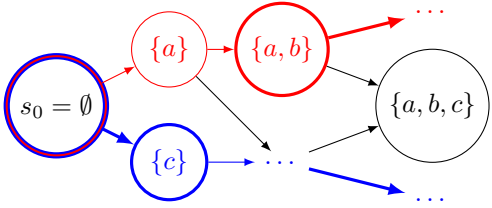
\begin{figure}[ht]
  \begin{center}
      \begin{tikzpicture}
        \tikzset{vertex/.style = {circle, draw, minimum size=1cm}}
    \tikzset{edge/.style = {->,> = latex}}

    \node[vertex, draw=blue, double=red, double distance=0.75pt, thick] (s0) at (0,0) {$s_0 = \emptyset$};
    \node[vertex, red] (sa) at (1.5, 0.75) {$\{a\}$}; 
    \node[vertex, blue, very thick] (sc) at (1.5, -0.75) {$\{c\}$}; 
    \node[vertex, red, very thick] (sab) at (3, 0.75) {$\{a,b\}$}; 
    \node[vertex] (sabc) at (5, 0) {$\{a,b,c\}$}; 
    \node[red] (sdot1) at (5, 1.25) {$\dots$}; 
    \node[blue] (sdot2) at (5, -1.25) {$\dots$}; 
    \node[blue] (sdot3) at (3, -0.75) {$\dots$};

    \draw[edge, red] (s0) -- (sa);
    \draw[edge, blue, very thick] (s0) -- (sc);
    \draw[edge, red] (sa) -- (sab);
    \draw[edge] (sab) -- (sabc);
    \draw[edge,red, very thick] (sab) -- (sdot1);
    \draw[edge] (sa) -- (sdot3);
    \draw[edge, blue] (sc) -- (sdot3);
    \draw[edge, blue, very thick] (sdot3) -- (sdot2);
    \draw[edge] (sdot3) -- (sabc);

  \end{tikzpicture}

  \end{center}
  \caption{A subset of the DAG, showing an example pair of trajectories for $\textcolor{red}{\mathcal{T}(p_{x,c})}$ and $\textcolor{blue}{\tilde{\mathcal{T}}(p_{x,c})}$ for $x = \{a,b,c\}$. This pair of trajectory yields the bound $\operatorname{UB}(\{a,b,c\}| \emptyset, c) = R(\{c\}) - R(\emptyset) + R(\{a,b\})$. The $\dots$ indicate an ellipsis in the DAG, where any number of valid transitions are taken. Bold edges and vertices highlight key elements in trajectory.}
  \label{fig:alpha_beta}
 \end{figure}

\paragraph{Trajectory-paring graph} Fixing a terminating state $x\in\mathcal X$, we exploit a graph representation $G(x) = (\mathcal{V}, \mathcal{E}(x))$ of the trajectories, where vertices $\mathcal V = \mathcal T$ correspond all possible trajectories and edges $\mathcal{E}(x) = \bigcup_{\forall a \in x} \mathcal{T}(p_{x,a}) \times \tilde{\mathcal{T}}(p_{x,a})$ correspond to all pairs of parent-compatible trajectories that result in an upper bound on $R(x)$. Notice that $|\mathcal{E}(x)| = C|\mathcal{T}(p_{x,a})||\tilde{\mathcal{T}}(p_{x,a})|$, as each parent trajectory of $x$ (Definition~\ref{def:alpha}), for any of the $C$ parents, can be paired with any of their respective compatible trajectories (Definition~\ref{def:beta}). We now show how many upper bounds on $R(x)$ can be generated (in expectation) from $m$ trajectories (Assumption~\ref{ass:uniform_pf}). 

\begin{proposition}[Expected number of upper bounds on the reward of a terminating state]
Given a terminating state $x\in\mathcal X$ and $m$ trajectories (Assumption~\ref{ass:uniform_pf}), there are $Q(m)$ trajectory pairs able to generate an upper bound on $R(x)$, with
\begin{equation*}
\begin{split}
   \mathbb{E}&[Q(m)] \\ &\in  \Omega\left(N^CC!\left(1-\frac{C-1}{N}\right)^{C-1}\left(1 - e^{-\frac{m}{N^C}}\right)^2\right).
\end{split}
\end{equation*}
This holds for any terminating state $x$. Hence, all terminating states expect the same number of upper bounds from a given dataset, which grows with $m$. 
\label{prop:exp_Q}
\end{proposition}


\begin{proofsketch}
  This is obtained directly from the linearity of expectation applied on the number of edges in the subgraph induced by the $m$ sampled trajectories. 
\end{proofsketch}

Proposition~\ref{prop:exp_Q} states that there may be multiple upper bounds available for a given terminating state $x$ given $m$ trajectories. 
As a result, some upper bounds will be closer to $R(x)$ than others. In the following, unless specified, $\operatorname{UB}(x)$ denotes the tightest bound available on $R(x)$.


\subsection{Expected Coverage}
\label{sec:expected_coverage}



We now study how much of the terminating state space $\mathcal{X}$ we can expect to \emph{cover} with submodular upper bounds after $m$ uniformly sampled trajectories~(Assumption~\ref{ass:uniform_pf}). 
%
%
%
To achieve this, we begin by recovering the probability of sampling trajectories such that a bound on $R(x)$ emerges, for any 
terminating state $x\in\mathcal{X}$. As two edges in the graph $G(x)$ may share a common vertex, we must account for the dependencies between the edges (i.e. between different upper bounds for the same $R(x)$). Using classical results in probabilistic combinatorics, we obtain the following:

  \begin{theorem}[Probability of producing a submodular upper bound]
    \label{thm:proba_bound}
    Let $Q(m)$ be the number of trajectory pairs generating upper bounds on the reward of any terminating state $x\in\mathcal X$ given $m$ trajectories~(Assumption~\ref{ass:uniform_pf}). We have that
    \begin{align*}
        \mathbb{P}(Q(m) > 0) \geq 1 - e^{-\Omega\left(\Lambda(m)\right)},
    \end{align*}
    where
    \begin{align*}
        \Lambda(m) := N(C-1)!\left( 1-\frac{C-1}{N} \right)^{2(C-1)}\left(1-e^{-\frac{m}{N^{C}}}\right)
    \end{align*}
    given $N$ actions and cardinality constraint $C$.
  \end{theorem}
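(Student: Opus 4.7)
The plan is to apply Janson's inequality to the collection of edge indicators in the trajectory-pairing graph $G(x)$, so that the first-moment estimate of Proposition~\ref{prop:exp_Q} can be upgraded to a tail bound on the event $\{Q(m)=0\}$. First I would fix $x \in \mathcal{X}$ and, for every edge $e = (\tau_1, \tau_2) \in \mathcal{E}(x)$, introduce the indicator $X_e$ that equals $1$ when both $\tau_1$ and $\tau_2$ appear among the $m$ uniformly sampled trajectories, so that $Q(m) = \sum_{e \in \mathcal{E}(x)} X_e$. Under Assumption~\ref{ass:uniform_pf}, any fixed trajectory is drawn at least once with probability $q_1 := 1-(1-1/|\mathcal{T}|)^m \geq 1-e^{-m/|\mathcal{T}|} \geq 1 - e^{-m/N^{C}}$, and because the two endpoints of every edge are distinct trajectories one has $\mathbb{P}(X_e=1)=q_1^2$, hence $\mu := \mathbb{E}[Q(m)] = |\mathcal{E}(x)|\,q_1^2$.

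Next I would compute the Janson dependence sum
\[
\Delta := \sum_{\{e,e'\}:\, e \neq e',\, e \cap e' \neq \emptyset} \mathbb{P}(X_e=1,\, X_{e'}=1),
\]
where $e \cap e' \neq \emptyset$ means the two edges share a trajectory. Any such pair involves exactly three distinct trajectories, so $\mathbb{P}(X_e X_{e'}=1)=q_1^3$. The number of dependent pairs reduces to $\sum_{\tau}\binom{\deg_{G(x)}(\tau)}{2}$, and a trajectory's degree in $G(x)$ admits a clean combinatorial bound via Definitions~\ref{def:alpha}--\ref{def:beta}: any $\tau$ can serve as a parent trajectory for at most $C$ values of $a$ and as a compatible trajectory for a similarly bounded number of pairs, yielding $\deg_{G(x)}(\tau) = O\!\left(C\,\max(|\mathcal{T}(p_{x,a})|,\, |\tilde{\mathcal{T}}(p_{x,a})|)\right)$. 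Janson's inequality then delivers $\mathbb{P}(Q(m)=0) \leq \exp(-\mu^2/(\mu+\Delta))$.

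The last step is to massage the exponent into the advertised form. Using $|\mathcal{T}(p_{x,a})| = \Theta((C-1)!\,N^{C-1})$ and $|\tilde{\mathcal{T}}(p_{x,a})| = \Theta\!\left(N^{C-1}(1-(C-1)/N)^{C-1}\right)$ obtained by the same counting arguments that support Proposition~\ref{prop:exp_Q}, the ratio $\mu^2/(\mu+\Delta)$ simplifies to $\Omega(\Lambda(m))$. The factor $N(C-1)!$ in $\Lambda(m)$ emerges from $|\mathcal{E}(x)|$ divided by the typical degree of $G(x)$, while the square $(1-(C-1)/N)^{2(C-1)}$ comes from squaring the single power of $(1-(C-1)/N)^{C-1}$ carried by $|\tilde{\mathcal{T}}(p_{x,a})|$ when the two factors entering $\mu$ (one per endpoint of an edge) are multiplied.

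The main obstacle is the bookkeeping in the second step: recovering exactly the claimed prefactor requires carefully tracking the bipartite-like asymmetry between parent and compatible trajectories, since an edge is drawn from $\mathcal{T}(p_{x,a})\times\tilde{\mathcal{T}}(p_{x,a})$ rather than from a symmetric pair set. One must also verify that the degree bound is uniform across both sides of this bipartition and across all $x \in \mathcal{X}$, so that the resulting Janson estimate does not degrade with the choice of terminating state. Once this combinatorial accounting is settled, the probabilistic step is standard.
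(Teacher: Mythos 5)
Your overall strategy is the same as the paper's: turn $Q(m)$ into a sum of edge indicators over $G(x)$, take the first moment from Proposition~\ref{prop:exp_Q}, observe that two dependent edges involve exactly three distinct trajectories so each dependent pair contributes $\approx\bigl(1-e^{-m/|\mathcal{T}|}\bigr)^3$, and close with Janson's second inequality $\mathbb{P}(Q(m)=0)\le e^{-\mu^2/(\mu+\Delta)}$. The paper organizes the dependence sum by explicitly enumerating two families of shared-vertex pairs (pairs within a single parent's $\mathcal{T}(p_{x,a})\times\tilde{\mathcal{T}}(p_{x,a})$, and cross-parent pairs through the intersection $\tilde{\mathcal{T}}(p_{x,a})\cap\tilde{\mathcal{T}}(p_{x,b})$), whereas you propose the equivalent count $\sum_\tau\binom{\deg_{G(x)}(\tau)}{2}$; these are the same quantity, so the route is not genuinely different.

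There is, however, a concrete gap in your step two: the uniform degree bound $\deg_{G(x)}(\tau)=O\bigl(C\max(\lambda,\beta)\bigr)$, with $\lambda:=|\mathcal{T}(p_{x,a})|=\Theta(N(C-1)!)$ and $\beta:=|\tilde{\mathcal{T}}(p_{x,a})|=\Theta(N^{C-1})$, is too crude to recover the theorem. Applied symmetrically it gives $\Delta=O\bigl(C^3\beta^3 q_1^3\bigr)=O\bigl(C^3N^{3(C-1)}q_1^3\bigr)$, and then $\mu^2/\Delta=\Theta\bigl((C!)^2N^{2C-3(C-1)}q_1/C^3\bigr)$, which tends to $0$ as $N$ grows once $C\ge 4$ --- the exponent becomes vacuous rather than $\Omega(\Lambda(m))$. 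What saves the argument is precisely the asymmetry you defer to the end: parent-side vertices have degree $O(\beta)$ while compatible-side vertices have degree only $O(C\lambda)$, so $\sum_\tau\binom{\deg\tau}{2}=O\bigl(C\lambda\beta^2+C^3\lambda^2\beta\bigr)=O\bigl(C\,C!\,N^{2C-1}\bigr)$, matching the paper's Lemma~\ref{lemma:nu} and yielding $\mu^2/\Delta=\Theta(\Lambda(m))$. Since you flag this bookkeeping as ``the main obstacle'' but leave it unexecuted, the proof as written does not yet establish the stated rate. Two smaller points: the claim $\mathbb{P}(X_e=1)=q_1^2$ is not an exact equality (the two appearance events are dependent under $m$ uniform draws; the correct expression is $1-2(1-1/|\mathcal{T}|)^m+(1-2/|\mathcal{T}|)^m\le q_1^2$, so you need the inclusion--exclusion form or the paper's Proposition~\ref{prop:lb_distinct_bounds_proba} to get a valid \emph{lower} bound on $\mu$); and the factor $(1-(C-1)/N)^{2(C-1)}$ comes from the two copies of $\beta\ge(N-C+1)^{C-1}$ in $\mu^2$ against $N^{2(C-1)}$ in $\Delta$, not from ``one per endpoint of an edge,'' since only the compatible endpoint carries a factor of $\beta$.
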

  \begin{proofsketch}
      Leveraging the graph representation, we find the number of edges with a shared vertex in $G(x)$, then compute the total pairwise dependence between edges in $G(x)$ by summing the probability of co-occurence of two edges with a shared vertex in $m$ sampled trajectories. Combining this with Proposition~\ref{prop:exp_Q}, we obtain our result using Janson's Inequalities~\citep{bollobasChromaticNumberRandom1988, jansonExponentialBoundProbability1988}. 
  \end{proofsketch}
 
  Theorem~\ref{thm:proba_bound} indicates that the probability of having a submodular upper bound on $R(x)$ follows the complement of an exponential decay, with the growing rate governed by the number of actions and the cardinality constraint. 
  This leads to 
  a lower bound on the expected coverage of $\mathcal{X}$:

  \begin{corollary}[Expected coverage of terminating states]
    \label{cor:coverage}
    Let $\kappa(m)$ denote the number of terminating states covered by a submodular upper bound given $m$ trajectories~(Assumption~\ref{ass:uniform_pf}). Then given $\Lambda(m)$ (Theorem~\ref{thm:proba_bound}),
   \begin{equation*}
     \mathbb{E}[\kappa(m)] \geq \binom{N}{C} \left(1 - e^{-\Omega\left(\Lambda(m)\right)}\right).
   \end{equation*} 
  \end{corollary}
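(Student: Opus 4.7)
The plan is to reduce expected coverage to a sum of per-terminating-state bound probabilities via linearity of expectation, and then invoke Theorem~\ref{thm:proba_bound} uniformly across $\mathcal{X}$.

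First, I would introduce, for each $x \in \mathcal{X}$, the indicator random variable $Y_x := \mathbb{1}[Q_x(m) > 0]$, where $Q_x(m)$ denotes the number of distinct submodular upper bounds obtainable on $R(x)$ from the $m$-trajectory dataset. By the definition of $\kappa(m)$, a terminating state is covered exactly when at least one bound on its reward is available, so $\kappa(m) = \sum_{x \in \mathcal{X}} Y_x$. Applying linearity of expectation (without requiring any independence between the $Y_x$) yields
\begin{equation*}
    \mathbb{E}[\kappa(m)] = \sum_{x \in \mathcal{X}} \mathbb{P}(Q_x(m) > 0).
\end{equation*}

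Next, I would invoke Theorem~\ref{thm:proba_bound} termwise. The key observation supporting this step is that the lower bound $1 - e^{-\Omega(\Lambda(m))}$ is stated purely in terms of $N$, $C$ and $m$ and is therefore independent of the particular $x$ chosen. This uniformity is not coincidental: Definitions~\ref{def:alpha} and \ref{def:beta} explicitly record that $|\mathcal{T}(p_{x,a})|$ and $|\tilde{\mathcal{T}}(p_{x,a})|$ do not depend on $x$ or $a$, so the trajectory-paring graph $G(x)$ has the same edge count for every terminating state, and the Janson-inequality argument behind Theorem~\ref{thm:proba_bound} produces an identical bound for each $x$. This lets me replace every summand by $1 - e^{-\Omega(\Lambda(m))}$.

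Finally, I would use $|\mathcal{X}| = \binom{N}{C}$, since terminating states are exactly the $C$-subsets of the $N$ available actions, to conclude
\begin{equation*}
    \mathbb{E}[\kappa(m)] \geq \binom{N}{C}\bigl(1 - e^{-\Omega(\Lambda(m))}\bigr).
\end{equation*}
There is no real obstacle here; the whole argument is essentially a union-bound-style aggregation on top of Theorem~\ref{thm:proba_bound}. The only point that merits care is explicitly stating why the per-$x$ bound applies uniformly across $\mathcal{X}$ rather than being $x$-dependent, since otherwise the final sum would not factor into the stated closed form.
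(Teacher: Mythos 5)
Your proposal is correct and matches the paper's own proof: both decompose $\kappa(m)$ into per-state indicator variables, apply linearity of expectation, invoke the $x$-independent probability bound from Theorem~\ref{thm:proba_bound} termwise, and multiply by $|\mathcal{X}| = \binom{N}{C}$. Your explicit justification of why the bound is uniform in $x$ is a point the paper also relies on (via the structural symmetry of $G(x)$), so there is no substantive difference.
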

  \begin{proofsketch}
      This result follows from a direct application of the definition of expectation.
  \end{proofsketch}

\section{\subo{}}
\label{sec:subo}

We introduce \textbf{S}ubmodular \textbf{U}pper-\textbf{Bo}unds GFN (\subo{}), an approach that leverages the submodular upper bounds (Property~\ref{obs:subo}) to transform a single query to the reward function into multiple upper bounds on terminating states (given previously sampled trajectories). 
Algorithm~\ref{alg:subo} provides a general description of the procedure to train a GFN with submodular upper bounds and an arbitrary flow matching loss $\mathcal{L}$, including upper bounds generated from triplets of trajectories. Training from upper bounds on the rewards may leverage the backward policy $P_B$ in order to generate valid training trajectories from the terminating state with an upper bound. We can then plug this trajectory into the loss $\mathcal{L}$, replacing the usual terminating state reward $R(x)$ with $\operatorname{UB}(x)$. Lastly, many upper bounds may be available for a given terminating state $x$. Choosing the smallest one narrows the gap to the target distribution.

\begin{algorithm}
  \caption{\subo{}}
  \label{alg:subo}
  \begin{algorithmic}
    \STATE {\bfseries Input:} 
    GFN parameters $\theta$, loss $\mathcal{L}$, data collection policy $\pi$, replay buffer $\mathcal{B}$, dataset $\mathcal{D}$, upper bounds $\mathcal{U}$

    \STATE Sample trajectory $\tau$ ending in terminating state $x$ with $\pi$
    \FOR{{\bfseries each} $s\rightarrow s^\prime \in \tau$}
    \STATE Collect reward $R(s)$
    \STATE Update dataset $\mathcal D^\prime \leftarrow \mathcal D \cup (s, R(s))$ 
    \ENDFOR 
    \STATE Update dataset $\mathcal D \leftarrow \mathcal D \cup (x, R(x))$ 
    \STATE Update buffer $\mathcal B \leftarrow \mathcal B \cup (\tau, R(x))$
    \FOR{{\bfseries each} $(s, R(s)) \in \mathcal D^\prime \setminus \mathcal D$}
    \FOR{{\bfseries each} $(s^\prime, R(s^\prime)) \in \mathcal D^\prime$}
    \IF {$\exists x^\prime \in \mathcal{X}: s \subset s^\prime = x^\prime\setminus \{a\}, a \in \mathcal{A}(s)$}
        \STATE Update parent set$\mathbf{P}_{s,a} \leftarrow \mathbf{P}_{s,a}\cup (s^\prime, R(s^\prime))$ 
    \ENDIF
    \IF {$s^\prime = s\cup \{a\}, a \in \mathcal{A}(s)$}
        \STATE Update compatible set $\mathbf{S}_{s,a} \leftarrow \mathbf{S}_{s,a} \cup (s^\prime, R(s^\prime))$ 
    \ENDIF
    \ENDFOR
    \STATE Update $\mathcal{U}$ with $s, \mathbf{S}_{s,a}, \mathbf{P}_{s,a}, \forall a \in \mathcal{A}(s)$ (Prop.~\ref{obs:subo})
    \STATE Filter observed terminating states from $\mathcal{U}$.
    \ENDFOR 

    \STATE Sample trajectories from $\mathcal{B}$ and upper bounds from $\mathcal{U}$ and train $\theta$ with $\mathcal L$ 
  \end{algorithmic}
\end{algorithm}



 
\subsection{Data Efficiency}
A fair question is whether \subo{} covers more of the terminating state space $\mathcal{X}$ per query to the reward function than classical GFNs. 
%
Recall that learning signals in classical GFNs~\cite{bengioGFlowNetFoundations2023} are typically limited to rewards queried at terminating states, whereas \subo{} can also query rewards at intermediate states. Assuming \subo{} observes a total of $mC$ rewards in $m$ trajectories, as a result of observing intermediate rewards, then a classical GFN could observe up to $mC$ terminating states and their rewards in the same number of queries to the reward function. As Corollary~\ref{cor:coverage} highlights, a key quantity for the coverage $\kappa(m)$ is the ratio between the cardinality constraint $C$ and the number of available actions $N$. 

Figure~\ref{fig:ratios} displays the relationship between $C/N$ and the ratio $\kappa(m)/mC$ for different number of trajectories $m$ for \subo{} with a uniform data collection policy. All points above the black dashed line are situations where using \subo{} is worthwhile, as the $mC$ queries to $R$ allow it to cover more than $mC$ terminating states using upper bounds. \textbf{In most scenarios of interest, where $C \ll N$, submodular upper bounds are data-efficient, covering a large portion of $\mathcal{X}$ with relatively few queries to $R$.}
\begin{figure}[t]
    \centering
    \includegraphics{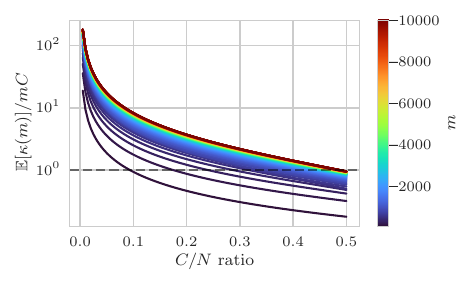}
    \caption{Normalized coverage (using Theorem~\ref{thm:true_Q_proba}) as a function of the cardinality-actions ratio and the number of uniformly sampled trajectories~(Assumption~\ref{ass:uniform_pf}). 
    As $C/N$ decreases, submodular upper bounds covers order of magnitudes more terminating states than would a classical GFN. The black dashed line represents the $1:1$ ratio between query and coverage. Points above this line represent scenarios where \subo{} is worthwhile.}
  \label{fig:ratios}
\end{figure}
This is in stark contrast to classical GFN training where one query to $R$ gives learning signal on a single $x \in \mathcal{X}$. In proportion, \subo{} turns a single query to the reward function at an intermediate state into multiple learning signals over the terminating states $\mathcal{X}$ via the upper bounds.




\subsection{Sampling Distribution Bias}
Let $P$ denote the target sampling distribution, with $P(x) := R(x)/Z$ for terminating state $x\in\mathcal X$, and let $\tilde P$ denote the learned sampling distribution. As \subo{} explores $\mathcal{X}$ entirely, the learning signal becomes $R(x), \forall x \in \mathcal{X}$. Therefore, \subo{} asymptotically converges to $P$, recovering typical convergence guarantees for GFNs~\citep{bengioFlowNetworkBased2021, malkinTrajectoryBalanceImproved2022}. 

We now analyze the bias induced in the learned sampling distribution when jointly learning from upper bounds and observed rewards, at the point where all unobserved terminating states have been assigned an upper bound.
%
Let $\mathcal X_{\operatorname{UB}} \subset \mathcal X$ and $\bar{\mathcal X}_{\operatorname{UB}} = \mathcal X \setminus \mathcal X_{\operatorname{UB}}$ respectively denote the set of unobserved terminating states, which are characterized by their upper bound, and the set of observed terminating states, for which the reward is known. Defining the \textit{optimism gap} $\Delta(x):=\operatorname{UB}(x) - R(x)$ at terminating state $x\in\mathcal X_{\operatorname{UB}}$, then jointly learning from rewards on $\bar{\mathcal X}_{\operatorname{UB}}$ and upper bounds on $\mathcal X_{\operatorname{UB}}$ induces the optimistic partition function
\begin{align}
\label{prop:misspec_z}
    \tilde{Z} := \sum_{x \in \mathcal{X}_{\operatorname{UB}}} \operatorname{UB}(x) +\sum_{x \in \bar{\mathcal{X}}_{\operatorname{UB}}} R(x) = Z + \sum_{x \in \mathcal{X}_{\operatorname{UB}}} \Delta(x)
\end{align}
and the resulting biased sampling distribution
\begin{align}
\label{eq:learned_dist}
    \tilde P(x) :=
    \begin{cases}
        \operatorname{UB}(x)/\tilde Z &\quad \text{for } x\in\mathcal X_{\operatorname{UB}}\\
        R(x)/\tilde Z &\quad \text{for } x\in \bar{\mathcal X}_{\operatorname{UB}}.
    \end{cases}
\end{align}
Given any terminating state $x \in \mathcal{X}$, we say that oversampling occurs when the probability of sampling $x$ under the learned distribution dominates the probability of sampling $x$ under the target distribution, that is $\tilde P(x) \geq P(x)$.


\begin{theorem}[Optimism-Induced Oversampling]
\label{thm:oversampling}
    Learning from upper bounds results in oversampling of the unobserved terminating state $x \in \mathcal{X}_{\operatorname{UB}}$ if and only if

  \begin{align*}
      \frac{(1-P(x))}{P(x)} \Delta(x) &\geq \sum_{x^\prime \in \mathcal{X}_\mathrm{UB} \setminus \{x\}}\Delta(x').
  \end{align*}

\end{theorem}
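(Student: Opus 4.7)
The plan is to prove the ``if and only if'' by a direct chain of equivalent algebraic manipulations starting from the definition of oversampling, $\tilde P(x) \geq P(x)$. Every quantity involved ($R$, $Z$, $\tilde Z$) is strictly positive, and each $\Delta(x') \geq 0$ since $\operatorname{UB}$ is an upper bound on $R$, so cross-multiplication and division by these quantities preserve the inequality and yield equivalences rather than one-way implications. This means both directions of the claim will follow from a single calculation.

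First, I would substitute $\tilde P(x) = \operatorname{UB}(x)/\tilde Z = (R(x) + \Delta(x))/\tilde Z$ (valid for $x \in \mathcal X_{\text{UB}}$, from \eqref{eq:learned_dist}) and $P(x) = R(x)/Z$ into $\tilde P(x) \geq P(x)$, and cross-multiply by $Z\tilde Z > 0$ to get $Z(R(x)+\Delta(x)) \geq \tilde Z\, R(x)$. Expanding $\tilde Z = Z + \sum_{x' \in \mathcal X_{\text{UB}}} \Delta(x')$ on the right (from \eqref{prop:misspec_z}) and canceling the common $Z R(x)$ term leaves
\[
Z\, \Delta(x) \geq R(x)\sum_{x' \in \mathcal X_{\text{UB}}} \Delta(x').
\]
Dividing by $R(x) > 0$ and using $Z/R(x) = 1/P(x)$ gives the compact form $\Delta(x)/P(x) \geq \sum_{x' \in \mathcal X_{\text{UB}}} \Delta(x')$.

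Second, I would split off the $x' = x$ term from the right-hand sum and move it to the left, producing
\[
\frac{1 - P(x)}{P(x)}\, \Delta(x) \geq \sum_{x' \in \mathcal X_{\text{UB}} \setminus \{x\}} \Delta(x'),
\]
which is exactly the stated condition. Since every step above is an equivalence, both directions of the ``iff'' are established simultaneously. There is no substantive obstacle here: the statement is essentially an algebraic identity made transparent by the definitions of $\tilde P$ and $\tilde Z$. The only point requiring care is tracking the role of the two normalization constants $Z$ and $\tilde Z$, whose difference is precisely the total optimism gap $\sum_{x' \in \mathcal X_{\text{UB}}} \Delta(x')$; isolating the contribution of $\Delta(x)$ itself from this sum is what produces the factor $(1-P(x))/P(x)$ on the left.
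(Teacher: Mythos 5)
Your proof is correct and follows essentially the same route as the paper's: substitute the definitions of $\tilde P$, $P$, and $\tilde Z$ into $\tilde P(x) \geq P(x)$, cancel the common $Z R(x)$ term, and split off the $x' = x$ contribution to obtain the factor $(1-P(x))/P(x)$. Your explicit remark that every step is a reversible manipulation (justifying the ``if and only if'') is a small but welcome addition over the paper's one-directional presentation.
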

\begin{proofsketch}
    This result is obtained by using the definitions of $P(x)$, $\tilde P(x)$, and $\tilde Z$ in $\tilde P(x) \geq P(x)$, and re-arranging terms using simple algebraic manipulations (see Appendix~\ref{app:proof_thm_oversampling} for details).
\end{proofsketch}
Theorem~\ref{thm:oversampling} shows that in order for unobserved terminating states to have an increased sampling probability under the learned distribution $\tilde P$ they need 1) their sampling probability under the target distribution $P$ to be low; and 2) their upper bound to be sufficiently optimistic, i.e., have large optimism gaps. Intuitively, for two terminating states $x, x'\in\mathcal X_{\operatorname{UB}}$ with $\Delta(x) = \Delta(x')$, the terminating state with the lowest sampling probability according to the true distribution will be favoured and vice versa, resulting in the exploration of high uncertainty terminating states.
Naturally, oversampling some terminating states comes at the price of undersampling others. By definition of $\tilde P(x)$ (Equation~\ref{eq:learned_dist}), all observed terminating states are undersampled, that is $\tilde P(x) < P(x)~\forall x\in\bar{\mathcal X}_{\operatorname{UB}}$. However, our experiments show that the bias is negligible compared to the amount of learning signal from the upper bounds, as \subo{} collapses more effectively to the target sampling distribution.

\section{Experiments}
\label{sec:experiments}

We now conduct experiments to evaluate the potential of \subo{} against a classical GFN~\cite{bengioFlowNetworkBased2021, malkinTrajectoryBalanceImproved2022} as baseline. 
%
%
%
We consider tasks inspired by the classical maximum set cover problem, which has a submodular structure. Specifically, we consider graph-based tasks, where given a graph, the action space $\mathcal A$ corresponds to adding one vertex from the graph to the current set (state), while the state space $\mathcal S$ corresponds to all the possible sets of vertices under the cardinality constraint $C$. Let $\mathcal N(a)$ denote the neighbouring vertices that can be reached from vertex $a\in\mathcal A$. The reward $R(s) = \bigl|\bigcup_{a \in s} \mathcal N(a)\bigr| / N$ of a given state $s \in \mathcal S$ corresponds to the (normalized) number of unique neighbours that can be reached by following the edges of the vertices in $s$, where $N = |\mathcal A|$.

\paragraph{Benchmark} First, we investigate synthetic instances using random Erdős–Rényi (ER) and Barabási–Albert (BA) graphs with $N = 1000$ vertices 
(see Appendix~\ref{app:synth_graphs} for details). We also conduct experiments on real-world graph datasets: Cora~\citep{yangRevisitingSemiSupervisedLearning2016} ($N=2708$), CiteSeer~\citep{yangRevisitingSemiSupervisedLearning2016} ($N=3279$), and GrQc~\citep{leskovecGraphEvolutionDensification2007} ($N=5249$).
%
%
We consider cardinality constraints $C \in \{5, 10, 15\}$ for all tasks to ensure that the problem instances are large without inducing issues with credit assignment due to long trajectories. As the difficulty of an instance increases with the cardinality constraint, 
we present results for $C=15$ (see Appendix~\ref{app:additional_results} for complete
results). 

\paragraph{Metrics} For each experiment, we measure the Flow Consistency in Subgraph (FCS), introduced as a tractable proxy for the Total Variation (TV) distance between the learned distribution $\tilde P$ and the target distribution $P$~\citep{silvaWhenGFlowNetsLearn2024}. 
We also report the Top-100 Average Reward, a classical application-focused performance measure for GFNs. Additional results on the loss and the number of bounds generated by \subo{} are in Appendix~\ref{app:additional_results}.


\paragraph{Methodology} A total of 10k queries to $R$ are allowed for each experiment. Restricting the query budget allows for a fair comparison between the classical GFN, which queries $R$ once per trajectory, and \subo{}, which observes intermediate state rewards. At budget exhaustion, we continue to train all GFNs \emph{offline} on their accumulated data to monitor the effect of learning signals that can be obtained using upper bounds.  
We present the results w.r.t. the gradient steps to observe the progress of training beyond the fixed number of trajectories that can be sampled, as the offline training aspect prevents meaningful tracking w.r.t. queries to $R$. Each experiment is repeated 10 times on a different random seed; we report the mean and 95\% confidence intervals (computed from a $t$ distribution). In experiments on random graphs, graph instances are also randomized by the seed, such that results are aggregated on 10 graph instances.

\paragraph{Implementation Details} We implement \subo{} such that, among the vast amount of bounds it generates, it picks the tightest bounds available and filters out upper bounds below the highest reward observed so far, in order to encourage exploration of promising high-reward states. After budget exhaustion, this filtering is turned off to prevent overfitting. We investigate the \subof{} variant, for which the filtering remains, focusing the flow where the reward seems to improve over the highest reward so far. \subo{}\textsc(-F) leverages $P_F$ as the data collection policy. As in prior works~\citep{zhangLetFlowsTell2023, silvaWhenGFlowNetsLearn2024,dasilvaEmbarrassinglyParallelGFlowNets2024a}, we parameterize all strategies with a Graph Isomorphism Network~\citep{xuHowPowerfulAre2018}. All GFNs are trained off-policy using the Trajectory Balance criterion~\citep{malkinTrajectoryBalanceImproved2022}, with a replay buffer accumulating sampled trajectories. Details are in Appendix~\ref{app:impl_details}. 

\subsection{Random Graphs}

Figure~\ref{fig:results_random_1000} displays the FCS and the Top-100 Average Reward over gradient steps, along with the learning loss reported as a reference, on 10 random instances of ER and BA graphs with cardinality constraint $C=15$.
%
\begin{figure}[t]
  \centering 
  \begin{subfigure}{0.53\columnwidth}
    \includegraphics{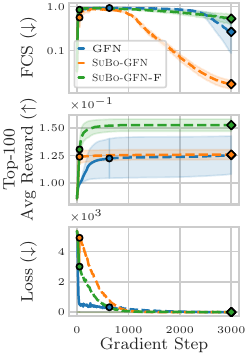}
    \caption{ER Graphs}
  \end{subfigure}
  \begin{subfigure}{0.46\columnwidth}
    \includegraphics{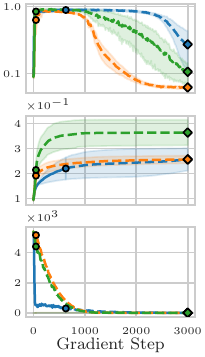}
    \caption{BA Graphs}
  \end{subfigure}
  \caption{Performance metrics and training loss of the classical GFN and \subo{} variants on random graphs. Strategies are trained online (plain line) until the $\bigbullet$ marker, then trained offline (dashed line) without further queries to $R$ until the end of the experiment indicated by the $\blackdiamond$ marker. 
  }\label{fig:results_random_1000}
\end{figure}

We observe that \subo{} improves distribution matching (decreases FCS) much faster than the classical GFN baseline. Looking at the loss, we observe that reducing the loss is much more indicative of reducing the FCS for \subo{} than it is for the classical GFN. This is likely directly due to the sheer amount of data \subo{} trains on, preventing overfitting. Indeed, classical GFN is limited to $10,000$ training data, which \subo{} greatly inflates by leveraging the problem structure to compute upper bounds. Indeed, Figure~\ref{fig:random_15} (Appendix~\ref{app:additional_results}) shows there are more than $3\times 10^5$ terminating states with an upper bound at the beginning of the offline training phase for \subo{}\textsc(-F). We observe that \subof{}, which only focuses on a small fraction of the upper bounds during offline training, is not able to improve distribution matching as much.

\subo{} is also competitive with classical GFNs in terms of Top-100 Average Reward, while having much tighter confidence intervals across experiment, indicating that \subo{} is much more consistent. \subof{} instead allocates flow strictly where there may be improvements to the best observed reward which leads to larger Top-100 Average Reward, at the cost of FCS.

\paragraph{Easier instances} Figures~\ref{fig:random_5} and \ref{fig:random_10} (Appendix~\ref{app:additional_results}) show a trend where, as the cardinality constraint $C$ grows, the classical GFN struggles with distribution matching, but catches up to \subo{}(\textsc{-F}) in Top-100 Average Reward. \subo{} exhibit strong performance in distribution matching regardless of the cardinality constraint $C$.

\subsection{Real-World Graphs}

Figure~\ref{fig:results_realworld} displays the FCS and the Top-100 Average Reward over gradient steps on real-world graphs Cora~\citep{yangRevisitingSemiSupervisedLearning2016}, CiteSeer~\citep{yangRevisitingSemiSupervisedLearning2016}, and GrQc~\citep{leskovecGraphEvolutionDensification2007}, with cardinality constraint $C=15$.

\begin{figure*}[!t]
  \centering 
  \begin{subfigure}{0.33\textwidth}
    \includegraphics{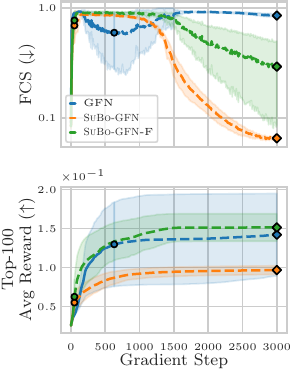}
    \caption{Cora}
  \end{subfigure}
    \begin{subfigure}{0.33\textwidth}
    \includegraphics{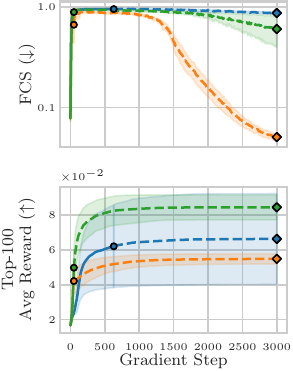}
    \caption{Citeseer}
  \end{subfigure}
  \begin{subfigure}{0.33\textwidth}
    \includegraphics{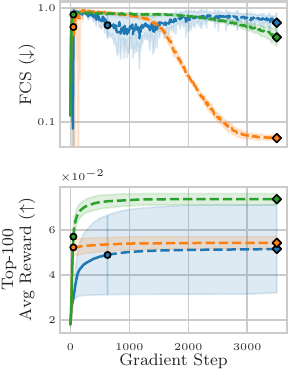}
    \caption{GrQc}
  \end{subfigure}

  \caption{Performance metrics of the classical GFN and \subo{} variants on real-world graphs. Strategies are trained online (plain line) until the $\bigbullet$, then trained offline (dashed line) without further queries to $R$ until the end of the experiment indicated by the $\blackdiamond$.
}\label{fig:results_realworld}
\end{figure*}

As previously, distribution matching improves (FCS decreases) much faster when using \subo{}. Once again, the volume of training data is likely to be the driving factor, as Figure~\ref{fig:real_15} (Appendix~\ref{app:additional_results}) shows there are nearly $4\times 10^5$ terminating states with an upper bound after the online training phase. 
In terms of Top-100 Average Reward, we note once again that \subo{} remains comparable to the classical GFN, while \subof{} sometimes significantly outperforms classical GFN as it specializes in attributing flow in promising areas of the state space.

\paragraph{Easier instances} Figures~\ref{fig:real_5} and~\ref{fig:real_10} (Appendix~\ref{app:additional_results}) show that, as the instances become larger, \subof{} seems to outpace classical GFNs. As such, \subo{} offers once again a better trade-off between distribution matching and high-reward candidate generation.

\subsection{Discussion}
Clearly, the experiments show leveraging $P_F$ as the data collection policy is sufficient to generate large amounts of upper bounds. As a result, \subo{} achieves superior distribution matching (lower FCS) without compromising on the quality of sampled candidates (Top-100 Average Reward) and with lower variance. 
Furthermore, filtering during offline training (with \subof{}) specializes \subo{} for high-reward candidate generation, either competing with or outperforming classical GFNs. These gains persist across scales
suggesting that structured rewards can unlock the potential of GFN in combinatorial domains. 
These results suggest that (constrained) data-augmentation with uncertain data (\subo{} being a specific case) seems to help GFNs reduce the distance to the target distribution faster without sacrificing their performance from an applicative standpoint. 
Additionally, despite the fact that the upper bounds can be arbitrarily loose, \subo{} allocates its flow much more efficiently than its classical counterpart. This raises questions, notably on how much uncertainty GFNs can tolerate while learning approximately the right distribution in practice. These results also support that data scale is the most important factor in learning the target, a long-held intuition in the field~\citep{halevyUnreasonableEffectivenessData2009}.

\section{Conclusion}
\label{sec:conclusion}

In this work, we study how the structure of the reward function in GFNs can be exploited. We first theoretically investigate how submodularity can be harnessed to retrieve upper bounds on the reward of unobserved terminating states. We show that the expected number of available upper bounds (with multiplicity) for a given terminating state grows exponentially with the number of sampled trajectories (Proposition~\ref{prop:exp_Q}). We then show that the probability that a terminating state has at least one upper bound grows in the same way(Theorem~\ref{thm:proba_bound}), and further provide a lower bound on the expected coverage of the terminating state space by an upper bound (Corollary~\ref{cor:coverage}). 

Following the OFU principle, we then introduce \subo{}, which leverages the submodular upper bounds on unobserved terminating states to train a GFN. \subo{} produces \emph{orders of magnitude} more training data than classical GFNs for the same number of queries to the reward function. We further show that the optimistic nature of \subo{} biases the learned distribution towards oversampling unobserved terminating states with high uncertainty or low sampling probability (Theorem~\ref{thm:oversampling}).

We empirically demonstrate that \subo{} improves distribution matching faster than classical GFNs without sacrificing high-quality candidate generation for applicative purposes. While we focus on graph-based tasks here, \subo{} is applicable to \emph{any} problem with a submodular reward (e.g. facility location, sensor selection, feature selection). Our empirical results raise questions notably on how much uncertainty GFNs can tolerate while still achieving the desired behavior in practice. 

\paragraph{Limitations and Future Work}
While we provide in-depth analyses when trajectories are collected uniformly, integrating the GFN interactive learning loop in the generation of upper bounds presents a promising extension. Theoretical results could be further strengthened by considering trajectory splicing, where two sub-trajectories are merged, to produce more compatible trajectories for a parent trajectory. Additionally, the various possible rearrangements of Assumption~\ref{ass:submod} suggest that extending submodular bounds to the full state space could enable scaling beyond fixed depth DAGs. Beyond submodularity, extending our understanding of useful mechanisms for GFNs on well-understood reward structures is an interesting prospect.



\section{Impact Statement}
This paper presents work whose goal is to advance the field of machine learning. There are many potential societal consequences of our work, none of which we feel must be specifically highlighted here.
\bibliography{main}
\bibliographystyle{icml2026}

\newpage
\appendix
\onecolumn

\section{Proofs}
\label{app:counting_and_proofs}
As most of our proofs rely on the size of parent $p_{x,\cdot}$, we define $K = C-1$. In addition to Assumption~\ref{ass:uniform_pf}, we also rely on the following assumption.

\begin{assumption}
\label{ass:max_k}
    Given the cardinality constraint $C$ and $N = |\mathcal{A}|$, we assume $C \leq N/2$. This implies $K \leq N/2 -1 \leq N/2$.
\end{assumption}

This assumption is mild, as binomial coefficients $\binom{N}{C}$ are symmetric, and peak at $C = \lfloor N / 2 \rfloor$, according to Sperner's theorem.

Furthermore, we rely on a graph representation of the trajectories and their relationships. Specifically, we fix a terminating state $x\in\mathcal X$ and define $G(x) = (\mathcal{V}, \mathcal{E}(x))$ of the trajectories, where vertices correspond to the set of all possible trajectories, $\mathcal{V} = \mathcal{T}$, and edges $\mathcal{E}(x) = \bigcup_{\forall a \in x} \mathcal{T}(p_{x,a}) \times \tilde{\mathcal{T}}(p_{x,a})$.That is to say, $\mathcal{E}(x)$ consists of pairs of trajectories $(\tau_i, \tau_j), \tau_i,\tau_j \in \mathcal{T}$ that result in an upper bound on $R(x)$. The graph has the same shape for any chosen $x$, but the edges vary based on $x$. In the following, we provide proofs for a fixed arbitrary $x \in \mathcal{X}$. Since the graph's structural properties are the same regardless of $x$, we apply our result to all of $\mathcal{X}$.

\subsection{Proof of Theorem~\ref{prop:exp_Q}}
First, recall the set of all trajectories $\mathcal{T}$. There are $\frac{N!}{(N-C)!} = \frac{N!}{(N-K-1)!}$ trajectories in $\mathcal{T}$, obtained directly from the permutations formula. We find the following asymptotic lower bound for $|\mathcal{T}|$.

\begin{lemma}
\label{lemma:ub_T}
Let $\mathcal{T}$ be the set of trajectories in the DAG, then $|\mathcal{T}| \in O\left(N^{K+1}\right)$
\end{lemma}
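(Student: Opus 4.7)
The plan is to bound the factorial quotient $|\mathcal T| = N!/(N-C)!$ directly, using $C = K+1$, and show that it is dominated by $N^{K+1}$ up to a constant.

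First I would rewrite the quotient as a falling factorial:
\begin{equation*}
    |\mathcal T| \;=\; \frac{N!}{(N-K-1)!} \;=\; N(N-1)(N-2)\cdots(N-K),
\end{equation*}
which is a product of exactly $K+1$ consecutive integers, each of which is at most $N$. Hence $|\mathcal T| \le N^{K+1}$ for every admissible $N$ and $K$, which immediately gives $|\mathcal T|\in O(N^{K+1})$ by taking the hidden constant to be $1$.

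If a cleaner asymptotic statement is desired, I would note that under Assumption~\ref{ass:max_k} we also have the matching lower bound $|\mathcal T|\ge (N-K)^{K+1} \ge (N/2)^{K+1}$, so in fact $|\mathcal T| = \Theta(N^{K+1})$. The main (and only) obstacle here is purely notational, namely keeping the index shift between $C$ and $K+1$ straight; no nontrivial estimation is required since the bound follows from counting the factors in the falling factorial.
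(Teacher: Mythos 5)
Your proof is correct and is essentially the paper's own argument: both bound $N!/(N-K-1)!$ by observing it is a falling factorial with $K+1$ factors each at most $N$, giving $|\mathcal{T}|\le N^{K+1}$. The additional $\Theta(N^{K+1})$ remark is a harmless (and correct) strengthening not needed for the lemma.
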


\begin{proof}
\begin{equation}
    \begin{split}
        |\mathcal{T}| = \frac{N!}{(N-K-1)!} \leq N^{K+1} \implies |\mathcal T| \in O\left(N^{K+1}\right)
    \end{split}
\end{equation}
\end{proof}

\subsubsection{How many trajectories fit Definition~\ref{def:alpha}?}
Let us first investigate how many trajectories pass through a single parent $p_{x,a}$. The following proposition states the result.
\begin{proposition}[Number of trajectories passing through a parent]
\label{prop:lambda_value}
Let $p_{x,a}$ be an arbitrary parent of $x$ and $\lambda$ be the number of trajectories passing through $p_{x,a}$, then 
$$
\lambda := K!(N-K-1).
$$ 
This quantity is identical for all $x$ and $a$.
\end{proposition}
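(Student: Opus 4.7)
The plan is a direct counting argument that splits any trajectory in $\mathcal{T}(p_{x,a})$ into its prefix (from $s_0$ to $p_{x,a}$) and its one-step suffix (from $p_{x,a}$ to a terminating state $x' \neq x$). These two pieces can be chosen independently, so I would compute each count separately and then multiply.

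For the prefix, I would use that $|p_{x,a}| = K$, so any trajectory reaching $p_{x,a}$ from $s_0 = \emptyset$ must add each element of $p_{x,a}$ exactly once, in some order. Because the DAG imposes no constraint on the order in which elements are added, each of the $K!$ permutations of the $K$ elements of $p_{x,a}$ yields a valid and distinct prefix, giving exactly $K!$ prefixes.

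For the suffix, since $|p_{x,a}| = C - 1$, exactly one more action is required to reach a terminating state, and each admissible action $a' \in \mathcal{A}(p_{x,a})$ produces the distinct terminating state $p_{x,a} \cup \{a'\}$. There are $|\mathcal{A}(p_{x,a})| = N - K$ such actions. Definition~\ref{def:alpha} excludes trajectories that terminate in $x$ itself, which removes exactly the single action $a$ (yielding $p_{x,a} \cup \{a\} = x$), leaving $N - K - 1$ valid continuations.

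Multiplying the independent counts gives $\lambda = K!(N - K - 1)$. The argument depends only on $|p_{x,a}|$ and $|\mathcal{A}|$, not on the identity of the particular $x$ or $a$, so the value is identical across all parents of all terminating states. There is no substantive obstacle; the only thing to be careful about is tracking the $-1$ that comes from excluding the suffix action $a$, which is easy to drop if one conflates ``passing through $p_{x,a}$'' with ``passing through $p_{x,a}$ and allowed to end anywhere.''
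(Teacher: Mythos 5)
Your proposal is correct and matches the paper's own argument essentially verbatim: count the $K!$ ordered prefixes reaching $p_{x,a}$, then multiply by the $N-K-1$ terminal actions remaining after excluding the single action $a$ that would lead to $x$. Your closing remark about not conflating ``passing through $p_{x,a}$'' with ``allowed to end anywhere'' correctly identifies the one subtlety, which the paper handles the same way via Definition~\ref{def:alpha}.
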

\begin{proof}
There are $K!$ permutations that reach $p_{x,a}$, since the parent contains $K$ elements, and then any transition that does not lead to $x$ can be chosen out of the $N-K$ remaining possible transitions. Since a single transition out of the parent leads to $x$, we get $(N-K-1)$ possible transitions, yielding our result.
\end{proof}

We then compute the number of trajectories passing through any parent of terminating state $x \in \mathcal{X}$.
\begin{proposition}[Number of trajectories passing through any parent of a terminating state]
\label{prop:alpha_value}
Let $\alpha$ be the number of trajectories passing through a parent, then 
$$
\alpha := (K+1) K!(N-K-1) = (K+1)!(N-K-1)
$$  
This quantity is identical for all $x$ and $a$.
\end{proposition}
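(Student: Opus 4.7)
The plan is to count trajectories passing through any parent of a fixed terminating state $x$ by partitioning them according to \emph{which} parent they traverse. First, I would note that $x$ has exactly $C = K+1$ parents, namely the sets $p_{x,a} = x \setminus \{a\}$ for each $a \in x$. Applying Proposition~\ref{prop:lambda_value} to each parent gives $|\mathcal{T}(p_{x,a})| = \lambda = K!(N-K-1)$ trajectories passing through $p_{x,a}$ and ending in some $x' \neq x$.

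The key step is to argue that $\mathcal{T}(p_{x,a})$ and $\mathcal{T}(p_{x,a'})$ are disjoint whenever $a \neq a'$. This follows from the cardinality structure of a trajectory: any trajectory $\tau \in \mathcal{T}$ visits exactly one state of each cardinality from $0$ to $C$, so in particular $\tau$ contains a unique state $s_K$ with $|s_K| = K$. If $s_K = p_{x,a}$, then $s_K \neq p_{x,a'}$ for $a' \neq a$, since distinct parents of $x$ are distinct as sets. Hence each trajectory through some parent of $x$ belongs to exactly one $\mathcal{T}(p_{x,a})$.

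With disjointness established, $\alpha$ follows by straightforward summation: $\alpha = \sum_{a \in x} |\mathcal{T}(p_{x,a})| = (K+1)\lambda = (K+1)!(N-K-1)$. The independence of this count on the specific choice of $x$ and $a$ is immediate, as the formula involves only the structural parameters $N$ and $K$ and none of the quantities in the derivation depended on which terminating state was fixed.

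I do not anticipate a substantial obstacle here; the only genuinely content-bearing step is the disjointness observation, and it is a direct consequence of the fact that the state at position $K$ on any trajectory is uniquely determined, so a trajectory can coincide with at most one parent of $x$ at cardinality $K$. The rest is counting that reuses Proposition~\ref{prop:lambda_value} verbatim.
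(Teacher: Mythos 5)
Your proposal is correct and follows essentially the same route as the paper, which simply multiplies $\lambda$ from Proposition~\ref{prop:lambda_value} by the $K+1$ parents of $x$. Your explicit disjointness argument (each trajectory visits exactly one state of cardinality $K$, so it can pass through at most one parent) is a welcome justification that the paper's one-line proof leaves implicit, but it does not change the underlying approach.
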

\begin{proof}
Since there are $C = K+1$ parent for any given $x$, we can simply multiply $\lambda$ by $K+1$, yielding our result.
\end{proof}

\subsubsection{How many trajectories fit Definition~\ref{def:beta}}

\begin{proposition}
\label{prop:beta_value}
Let $p_{x,a}$ be a parent of the terminating state $x \in\mathcal{X}$, and let $\beta$ be the number of trajectories respecting Definition~\ref{def:beta} for this parent, then
\begin{equation*}
    \beta = \prod_{i=0}^{K-1} (N-i) - (K + 1)!
\end{equation*}
\end{proposition}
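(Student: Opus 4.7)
The plan is to drop the third clause of Definition~\ref{def:beta} first, count the larger set, and then subtract the surplus. Let $B$ denote the set of trajectories $\tau = (t_1,\ldots,t_{K+1})$ in which $a$ appears at some position $i$ and all $i-1$ elements preceding $a$ lie in $p_{x,a}$ (i.e., satisfying conditions (i) and (ii) of the definition, but allowing the terminating state to equal $x$). Then $\beta = |B| - |\{\tau \in B : \tau \text{ ends at } x\}|$, and I expect the two terms to yield $\prod_{i=0}^{K-1}(N-i)$ and $(K+1)!$ respectively.

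To compute $|B|$, I would condition on the position $i \in \{1,\ldots,K+1\}$ of $a$ in $\tau$. For fixed $i$, the prefix $(t_1,\ldots,t_{i-1})$ is an $(i-1)$-permutation of the $K$-element set $p_{x,a}$, contributing $K!/(K-i+1)!$ choices, while the suffix $(t_{i+1},\ldots,t_{K+1})$ is a $(K+1-i)$-permutation of the $N-i$ elements of $\mathcal{A}$ not yet used, contributing $(N-i)!/(N-K-1)!$ choices. Summing gives
\begin{equation*}
|B| \;=\; \sum_{i=1}^{K+1} \frac{K!\,(N-i)!}{(K-i+1)!\,(N-K-1)!} \;=\; K!\sum_{\ell=0}^{K} \binom{N-K-1+\ell}{\ell},
\end{equation*}
after substituting $\ell = K-i+1$ and recognizing binomial coefficients. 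The Hockey Stick identity $\sum_{\ell=0}^{K}\binom{m+\ell}{\ell} = \binom{m+K+1}{K}$ with $m = N-K-1$ then collapses the sum to $K!\binom{N}{K} = N!/(N-K)! = \prod_{i=0}^{K-1}(N-i)$.

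For the subtracted term, a trajectory $\tau \in B$ ends at $x$ iff its element set equals $x = p_{x,a} \cup \{a\}$. Any such trajectory automatically satisfies the $B$ conditions, since every element other than $a$ belongs to $x \setminus \{a\} = p_{x,a}$ and hence trivially every element occurring before $a$ is in $p_{x,a}$. So this count is exactly the number $(K+1)!$ of permutations of $x$, and subtracting gives the stated formula.

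The main obstacle is the combinatorial identity in the second paragraph: closing the sum over position $i$ of $a$ into the clean product form. The Hockey Stick identity is the most direct route, but as a sanity check one could verify the identity by induction on $K$ (using Pascal's rule) or by a direct bijection between $B$ and the set of $K$-permutations of $\mathcal{A}$; the latter is not immediate because the position of $a$ in $\tau$ is variable, which is why I would lean on the algebraic route via Hockey Stick.
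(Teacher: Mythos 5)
Your proof is correct and follows essentially the same route as the paper's: decompose by the length of the prefix preceding $a$ (equivalently, the position of $a$), count prefixes as partial permutations of $p_{x,a}$ and suffixes as partial permutations of the remaining elements, subtract the trajectories terminating in $x$, and collapse the resulting sum with the Hockey Stick identity. The only cosmetic difference is that you include the case where $a$ is added last in the positive count and subtract all $(K+1)!$ orderings of $x$ at once, whereas the paper excludes that case from the outset and distributes a subtraction of $K!\,(K-L)!$ across the prefix lengths $L$; your bookkeeping makes the Hockey Stick step marginally cleaner.
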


\begin{proof}
    
Let us fix $p_{x,a}$, a parent of $x$. By definition $a \not\in p_{x,a}$. We now investigate how many trajectories are compatible with a trajectory passing through $p_{x,a}$. We proceed in an algorithmic fashion as follows.
\begin{enumerate}
\item Fixing $s \subset p_{x,a}$, $|s| = L < K$, how many trajectories pass through $s$ and then pick element $a$?
    \begin{itemize}
	\item There are $L!$ sub-trajectories leading to $s$;
	\item Then, a single transition is taken (selecting element $a$);
	\item Then, we can select $K+1-(L+1) = K-L$ elements out of $N-(L+1)$ elements;
    \item This results in $\frac{(N-L-1)!}{(N-K-1)!}$ sub-trajectories out of $s\cup \{a\}$, since $N-K-1$ elements can be permuted out of $N-L-1$ elements;
    \item Combining, we get $L! \frac{(N-L-1)!}{(N-K-1)!}$ trajectories passing through $s$ and selecting element $a$.
	\end{itemize}

\item For a fixed $L < K$, how many $s \subset p_{x,a}$ exist such that $|s| = L$?
    \begin{itemize}
    	 \item Since we can combine any $L$ elements out of $K$ elements in the parent $p_{x,a}$, there are $\binom{K}{L}$ such sets.
    \end{itemize}
\item  Finally, notice that for every $s \subset p_{x,a}$, some of the trajectories passing through $s \cup \{a\}$ may also pass through $x = p_{x,a} \cup \{a\}$.
    \begin{itemize}
    	\item After transition $s \rightarrow s \cup \{a\}$, we can permute any element in $x \setminus s \cup\{a\}$. Thus, $K + 1 - L - 1 = K - L$ elements that can be permuted;
    	\item  Thus, there is a total of $L!(K-L)!$ trajectories that reach $s$, then choose $\{a\}$, then eventually end in $x$. 
        \item These trajectories must be subtracted, as they give $R(x)$, and thus render the computation of a bound useless.
    \end{itemize}
\end{enumerate}

We then sum these quantities for every $L\leq K-1$. As such, the number of trajectories that pass through ANY $s \subset p_{x,a}$ and selecting \emph{specifically} $a$ such that a bound on $x = p_{x,a} \cup \{a\}$ is discovered is:
\begin{equation*}
\begin{split}
\beta := \sum_{L=0}^{K-1} \binom{K}{L} L!\left[\frac{(N-L-1)!}{(N-K-1)!} - (K-L)!\right] &= \sum_{L=0}^{K-1} \frac{K!}{(K-L)!}\left[\frac{(N-L-1)!}{(N-K-1)!} - (K-L)!\right] \\
&= \sum_{L=0}^{K-1} K!\left[\frac{(N-L-1)!}{(K-L)!(N-K-1)!} - 1\right] \\
&= K!\sum_{L=0}^{K-1} \left[\binom{N-L-1}{K-L} -1\right]
\end{split}
\end{equation*}

With the Hockey Stick Identity (HSI), we can reshape $\beta$ to be more concise. The HSI states
$$
\sum_{i=0}^{k-1} \binom{c+i}{i} = \binom{k+c}{k-1}, \qquad n,r \in \mathbb{N}, n\geq r.
$$

Let $i := K-L$, which implies $L = K-i$, we can deduce that $c+i = N-L-1 = N-K-1+i$ and thus $c = N-K-1$. We can then rewrite the sum in $\beta$ so that it is now with respect to $i$. Notice that $L=0 \implies i=K$ and $L=K-1 \implies i=1$. The rewritten form is:
\begin{equation*}
  \begin{split}
\sum_{i=1}^K \binom{N-K-1+i}{i} =\sum_{i=0}^K \binom{N-K-1+i}{i} -1.
\end{split}  
\end{equation*}

We leverage the identity to obtain
\begin{equation*}
   \begin{split}
\sum_{L=0}^{K-1} \binom{N-L-1}{K-L} &= \sum_{i=0}^K \binom{N-K-1+1}{i} -1\\
&= \binom{N}{K} - 1.
\end{split} 
\end{equation*}

and deduce that $\sum_{L=0}^{K-1} 1 = K$.

Replacing in the definition of $\beta$, we get 
\begin{equation*}
    \begin{split}
\beta &= K!\left[\binom{N}{K} - K - 1\right] \\
&= N(N-1)(N-2)\dots(N-K+1) - K!(K + 1) \\
&= \prod_{i=0}^{K-1} (N-i) - (K + 1)!
\end{split}
\end{equation*}
\end{proof}

\subsubsection{Number of upper bounds (with multiplicity) for the reward of a  terminating state}
We can now leverage the structure of the graph $G(x)$ to obtain the number of upper bounds (with multiplicity) on terminating state $x \in \mathcal{X}$ after $m$ samples and get
\begin{proposition}[Expected number of upper bounds (with multiplicity) on the reward of a terminating state]
\label{thm:true_Q}
   If $Q(m)$ is the number of upper bounds, counted with multiplicity, on any given terminal state $x$ after $m$ uniformly sampled trajectories, then
 \begin{equation}
   \mathbb{E}[Q(m)] =  \alpha\beta \left(1 - 2\left( 1 - \frac{1}{|\mathcal{T}|} \right)^m + \left( 1- \frac{2}{|\mathcal{T}|} \right)^m\right)
 \end{equation} 
 Note that this is true for any terminating state $x$. Hence, every terminating state has the same expected number of upper bounds (with multiplicity), which grows with $m$, the number of uniformly collected trajectories. 
\end{proposition}
\begin{proof}
    First, note that there are $\alpha\beta$ edges in $\mathcal{E}(x)$, as each trajectory of the $\lambda$ trajectories respecting Definition~\ref{def:alpha} can be paired with any of its $\beta$ compatible trajectories. Since there are $(K+1)$ parents of a given terminating state $x \in \mathcal{X}$, we get $\alpha\beta$ compatible pairs, and thus, $\alpha\beta$ edges in $G(x)$.

Now, for an edge $e_i :=(\tau_i,\tau_j) \in \mathcal{E}(x)$, what is the probability that $\tau_i$ and $\tau_j$ appear at least once in the $m$ samples? Alternatively, one can frame this question as ``what is the probability that $e_i$ appears in the induced subgraph $G[W]$?'', where $W$ is set of trajectories that were sampled.

Let $E_{\tau_i} := \tau_i\text{ is not in the } m \text{ samples}$. Then, using the inclusion-exclusion principle, the probability that edge $(\tau_i,\tau_j)$ is \emph{not} present in the sample is
$$
P(e_i \not\in G[W]) = P(E_{\tau_{i}} \cup E_{\tau_{j}}) = P(E_{\tau_{i}}) + P(E_{\tau_{j}}) - P(E_{\tau_{i}} \cap E_{\tau_{j}}). 
$$

Since we assume the $m$ trajectories are sampled uniformly, we know that $P(E_{\tau_{i}}) = (1 - 1/|\mathcal{T}|)^m$,$P(E_{\tau_{i}}) = (1 - 1/|\mathcal{T}|)^m$ and $P(E_{\tau_{i}} \cap E_{\tau_{j}}) = (1 - 2/|\mathcal{T}|)^m$. This follows from the fact that for $E_{\tau_i}$ to occur, any trajectory except $\tau_i$ can be sampled. Therefore, the probability that both appear at least once is the complement:
$$
\mathbb{P}(e_i \in G[W]) = P(\neg (E_{\tau_{i}} \cup E_{\tau_{j}})) = 1 - 2\left( 1 - \frac{1}{|\mathcal{T}|} \right)^m + \left( 1- \frac{2}{|\mathcal{T}|} \right)^m.
$$

Let $Q(m)$ be the number of edges in $G[W]$ and let $\mathbbm{1}[e \in G[W]]$ be the indicator function for the inclusion of edge $e$ in the $G[W]$. By linearity of expectation, then we have for arbitrary $x$ the following number of upper bounds, counted with multiplicity:
\begin{equation*}
    \begin{split}
\mathbb{E}[Q(m)] &= \sum_{e \in E} \mathbb{E}[\mathbbm{1}[e \in G[W]]]\\
&= \sum_{e \in E}  1 - 2\left( 1 - \frac{1}{|\mathcal{T}|} \right)^m + \left( 1- \frac{2}{|\mathcal{T}|} \right)^m \\
&=  \alpha\beta \left(1 - 2\left( 1 - \frac{1}{|\mathcal{T}|} \right)^m + \left( 1- \frac{2}{|\mathcal{T}|} \right)^m\right)
\end{split}
\end{equation*}
\end{proof}

\subsubsection{Asymptotic lower bound}
We first lower bound $\alpha$ and $\beta$.

\begin{lemma}[Asymptotic lower bound for $\alpha$]
\label{prop:lb_alpha}
Let $\alpha$ be as in Proposition~\ref{prop:alpha_value}, then
    $$\alpha \in \Omega(N(K+1)!).$$
\end{lemma}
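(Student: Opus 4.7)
The plan is to expand $\alpha = (K+1)!(N-K-1)$ from Proposition~\ref{prop:alpha_value} and invoke the cardinality assumption (Assumption~\ref{ass:max_k}) to bound the factor $N-K-1$ from below by a constant multiple of $N$. Since $K \leq N/2 - 1$ under Assumption~\ref{ass:max_k}, we have $N - K - 1 \geq N - (N/2 - 1) - 1 = N/2$, so $(K+1)!(N-K-1) \geq \tfrac{1}{2} N (K+1)!$, which is exactly the definition of $\Omega(N(K+1)!)$ (with constant $1/2$).

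First I would write down the exact equality $\alpha = (K+1)!(N-K-1)$ and then derive the inequality $N-K-1 \geq N/2$ from Assumption~\ref{ass:max_k}. Multiplying through by $(K+1)!$ gives the required lower bound, from which the $\Omega$ statement follows immediately.

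The only subtle point, if any, is making sure that the asymptotics are understood jointly in $N$ and $K$ (with $K$ allowed to grow, subject to $K \leq N/2$) rather than purely in $N$; Assumption~\ref{ass:max_k} is exactly what ensures the linear-in-$N$ factor survives. I do not expect any real obstacle: this lemma is a one-line algebraic rearrangement plus an invocation of the standing cardinality assumption, and its role is simply to package a clean asymptotic form of $\alpha$ for reuse in the later bounds on $\mathbb{E}[Q(m)]$ and in the derivations of Proposition~\ref{prop:exp_Q} and Theorem~\ref{thm:proba_bound}.
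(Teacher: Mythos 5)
Your proposal is correct and follows essentially the same route as the paper: expand $\alpha = (K+1)!(N-K-1)$ and lower-bound the linear factor via Assumption~\ref{ass:max_k}. The only (cosmetic) difference is that you use the sharper $K \leq N/2 - 1$ to get constant $1/2$ directly, whereas the paper uses $K \leq N/2$ and then assumes $N \geq 4$ to obtain constant $1/4$.
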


\begin{proof}
We have
\begin{equation*}
   \begin{split}
\alpha &= (K+1)!(N-K-1) \\
&\geq (K+1)!\left( \frac{N}{2} -1 \right) \quad \text{($K \leq N/2$)} \\
\end{split}
\end{equation*}

Then, if $N \geq 4$, we get that 
\begin{equation*}
\begin{split}
\alpha &\geq \frac{N}{4}(K+1)! \\
\end{split}
\end{equation*}

As such, for $N \geq 4$ and $c = 1/4$, we find that $\alpha \in \Omega(N(K+1)!)$.
\end{proof}
\begin{lemma}[Asymptotic lower bound for $\beta$]
\label{prop:lb_beta}
    Let $\beta$ be as in Proposition~\ref{prop:beta_value}, then
    $$
    \beta \in \Omega((N-K)^K).
    $$
\end{lemma}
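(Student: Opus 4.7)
The plan is to decompose $\beta = \prod_{i=0}^{K-1}(N-i) - (K+1)!$ into a dominant product term and a negligible correction, then show the latter is at most half the former. First I would note that every factor of the product satisfies $N - i \geq N - K + 1$ for $i \in \{0, \ldots, K-1\}$, yielding the elementary lower bound
\begin{equation*}
\prod_{i=0}^{K-1}(N-i) \;\geq\; (N-K+1)^K \;\geq\; (N-K)^K.
\end{equation*}
This already exhibits $(N-K)^K$ as a lower bound on the product, so all that remains is to discard the factorial correction.

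Next, I would show that $(K+1)! \leq \tfrac{1}{2}\prod_{i=0}^{K-1}(N-i)$ under Assumption~\ref{ass:max_k}. The strategy is to rewrite the ratio $\prod_{i=0}^{K-1}(N-i)/(K+1)!$ as $\prod_{i=0}^{K-1}(N-i)/(K+1-i)$ by pairing factors (absorbing the leftover factor $1$ in $(K+1)!$), and to argue that each of the $K$ resulting ratios is at least $2$. A short derivative computation gives $\tfrac{d}{di}\tfrac{N-i}{K+1-i} = \tfrac{N-K-1}{(K+1-i)^2}$, which is positive under Assumption~\ref{ass:max_k}, so the minimum over $i \in \{0, \ldots, K-1\}$ is attained at $i=0$, namely $N/(K+1)$, and this is at least $2$ exactly because $N \geq 2(K+1)$. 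Hence the full ratio is at least $2^K \geq 2$ for $K \geq 1$, yielding the claim.

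Combining the two bounds gives $\beta \geq \prod/2 \geq (N-K)^K/2$, which establishes $\beta \in \Omega((N-K)^K)$ with hidden constant $1/2$. The main obstacle is the pairing step: one must correctly match the $K$ factors of the product with $K$ of the $K+1$ factors of the factorial (leaving the factor $1$ unused) and verify that every resulting ratio exceeds $2$. Monotonicity reduces this verification to the single inequality $N/(K+1) \geq 2$, which coincides with Assumption~\ref{ass:max_k}, so the argument closes cleanly.
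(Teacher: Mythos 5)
Your proof is correct, but it reaches the conclusion by a genuinely different route than the paper for the key step of discarding the $(K+1)!$ correction. The paper first writes $\beta \geq (N-K)^K - (K+1)K!$, then invokes Stirling's approximation and evaluates the limit of the ratio $(K+1)\sqrt{2\pi K}(K/e)^K / (N-K)^K$ as $N \to \infty$ (substituting the worst case $K = N/2$ from Assumption~\ref{ass:max_k}) to argue that the product term dominates. Your argument instead pairs the $K$ factors of $\prod_{i=0}^{K-1}(N-i)$ with the factors $K+1, K, \ldots, 2$ of $(K+1)!$ and shows each ratio $\frac{N-i}{K+1-i}$ is minimized at $i=0$, where it equals $N/(K+1) \geq 2$ precisely by Assumption~\ref{ass:max_k}; this yields the explicit, non-asymptotic bound $\beta \geq \tfrac{1}{2}\prod_{i=0}^{K-1}(N-i) \geq \tfrac{1}{2}(N-K)^K$ for every admissible $N$ and $K$ with $K \geq 1$. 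What your approach buys is a concrete hidden constant ($1/2$) and uniformity over all $K \leq N/2 - 1$ simultaneously, whereas the paper's limit computation only directly addresses the extreme case and relies on an approximation; what the paper's approach buys is brevity and consistency with the Stirling-based style used in its other asymptotic lemmas. The only caveat, which you implicitly acknowledge via the condition $K \geq 1$, is the degenerate case $K=0$ where $\beta = 0$; the paper's proof does not handle this either, so it is not a gap relative to the intended statement.
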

\begin{proof}
Then,
\begin{equation*}
\begin{split}
\beta  &= \prod_{i=0}^{K-1} (N-i) - (K + 1)! \\
\beta  &\geq (N-K)^K - (K+1)K! \\
\end{split}
\end{equation*}

We now show that $(N-K)^K$ dominates. Recall that $N \geq 2K$, we have
\begin{equation}
    \begin{aligned}
       \frac{(K+1)K!}{(N-K)^K} &\leq  \frac{(K+1)K!}{K^K}. \\
    \end{aligned}
\end{equation}
Taking the logarithm, then 
\begin{equation}
    \begin{aligned}
         \ln\left(\frac{(K+1)K!}{K^K}\right) &=  \ln(K+1) + \ln(K!) - K\ln(K) \\
        &=  \ln(K+1) + \sum_{i=1}^K \ln(i) - K\ln(K) \\
        &= \ln(K+1) + \sum_{i=1}^K \ln\left(\frac{i}{K}\right) \\
        &\leq  \ln(K+1) + \sum_{i=1}^K \left(\frac{i}{K} - 1\right) \quad & \left(\ln(z) \leq z-1, z >0\right) \\
        &=  \ln(K+1) - \frac{K-1}{2}.\\
    \end{aligned}
\end{equation}
Then, $\ln\left(\frac{(K+1)K!}{K^K}\right)\leq \ln(K+1) - \frac{K-1}{2} \to -\infty$. This implies $\frac{(K+1)K!}{(N-K)^K} \to 0$. Thus, we deduce $\beta \in \Omega((N-K)^K)$, since

\begin{equation}
    \begin{aligned}
    \beta  &\geq (N-K)^K\left(1 - \frac{(K+1)K!}{(N-K)^K}\right).
    \end{aligned}
\end{equation}



\end{proof}

We can then simplify the probability to make the equation more concise and readable.

\begin{proposition}
\label{prop:lb_distinct_bounds_proba}
    If $|\mathcal{T}|$ is large, then the probability term $\left(1 - 2\left( 1 - \frac{1}{|\mathcal{T}|} \right)^m + \left( 1- \frac{2}{|\mathcal{T}|} \right)^m\right) \in \Omega\left( \left(1-e^{-\frac{m}{N^{K+1}}}\right)^2\right)$
\end{proposition}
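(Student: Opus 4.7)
My plan is to replace the two Bernoulli factors $(1-1/|\mathcal T|)^m$ and $(1-2/|\mathcal T|)^m$ by their exponential surrogates $e^{-m/|\mathcal T|}$ and $e^{-2m/|\mathcal T|}$, recognize that the target expression then becomes essentially $(1-e^{-m/|\mathcal T|})^2$, and finally invoke Lemma~\ref{lemma:ub_T} to replace $|\mathcal T|$ by its upper bound $N^{K+1}$ inside the exponent.

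Setting $\epsilon := 1/|\mathcal T|$, I would first apply the standard inequality $(1-x)^m \leq e^{-mx}$ for $x \in [0,1]$, giving the upper bound $(1-\epsilon)^m \leq e^{-m\epsilon}$. For the matching lower bound on $(1-2\epsilon)^m$, I would use the second-order logarithmic inequality $\ln(1-y) \geq -y - y^2$ valid on $y \in [0, 1/2]$ (applicable at $y=2\epsilon$ once $|\mathcal T|$ is large enough), yielding $(1-2\epsilon)^m \geq e^{-2m\epsilon - 4m\epsilon^2} \geq e^{-2m\epsilon}(1-4m\epsilon^2)$ after a further use of $e^{-x} \geq 1-x$. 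Inserting both bounds into the target expression and gathering the exponentials, I would obtain
\begin{align*}
    1 - 2(1-\epsilon)^m + (1-2\epsilon)^m &\geq (1-e^{-m\epsilon})^2 - 4m\epsilon^2 e^{-2m\epsilon}.
\end{align*}
Monotonicity of $x \mapsto (1-e^{-x})^2$ combined with $\epsilon \geq 1/N^{K+1}$ from Lemma~\ref{lemma:ub_T} then gives $(1-e^{-m\epsilon})^2 \geq (1-e^{-m/N^{K+1}})^2$, producing the desired main term.

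The hard part will be ensuring the residual $4m\epsilon^2 e^{-2m\epsilon} = O(m/|\mathcal T|^2)$ is asymptotically dominated by $(1-e^{-m/N^{K+1}})^2$ so that it can be absorbed into the $\Omega$ constant. In the regime where $m/N^{K+1}$ stays bounded away from zero, the main term is $\Theta(1)$ while the residual vanishes, and the comparison is immediate. In the regime $m/N^{K+1} \to 0$, the main term scales like $m^2/N^{2(K+1)}$, while the residual, using the elementary lower bound $|\mathcal T| \geq (N-K)^{K+1}$ that comes from the factorial product for $|\mathcal T|$, is at most $4m/(N-K)^{2(K+1)}$; for $K$ fixed this gives a residual-to-main ratio of order $1/m$, which is dominated once $m$ grows. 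Both regimes thus yield the $\Omega$ claim.
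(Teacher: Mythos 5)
Your proposal is correct and follows essentially the same route as the paper's proof: approximate the Bernoulli factors by exponentials, recognize the perfect square $\left(1-e^{-m/|\mathcal{T}|}\right)^2$, and invoke Lemma~\ref{lemma:ub_T} to pass from $|\mathcal{T}|$ to $N^{K+1}$ in the exponent. The only difference is that the paper truncates the Maclaurin series of $\ln\left(1 - \frac{1}{|\mathcal{T}|}\right)$ and writes $\approx$ throughout without controlling the error, whereas you make the approximations one-sided and explicitly bound the residual $4m\epsilon^2 e^{-2m\epsilon}$ --- a strictly more careful execution of the same idea.
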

\begin{proof}
Let $y = \left( 1 - \frac{1}{|\mathcal T|} \right)^m$, then
\begin{equation}
\label{eq:proba_simplification}
    \begin{split}
y &= \left( 1 - \frac{1}{|\mathcal{T}|} \right)^m \\
\ln y &= m \ln\left( 1 - \frac{1}{|\mathcal{T}|} \right) \\
\ln y &= m \left(-\frac{1}{|\mathcal{T}|} - \frac{\left(\frac{1}{|\mathcal{T}|}\right)^2}{2} - \dots\right)  \quad \left(\text{Maclaurin Series}\right)\\
\ln y &= m \left(-\frac{1}{|\mathcal{T}|}\right) + O\left(\frac{m}{|\mathcal T|^2}\right) \\
y &= e^{\left(-\frac{m}{|\mathcal{T}|}\right)}e^{O(m/|\mathcal T|^2)}.\\
\end{split}
\end{equation}
Since $1-z\leq e^{-x} \leq \frac{1}{1+z}$ (via Bernoulli's inequalities), then we have

\begin{equation}
    \begin{aligned}
        y=\left(1-\frac{1}{|\mathcal T|}\right)^m \leq e^{-m/|\mathcal T|} \leq \left(1 - \frac{1}{|\mathcal T| + 1}\right)^m.
    \end{aligned}
\end{equation}
Since both the upper and the lower bound converge to $e^{-m/|\mathcal T|}$ as $|\mathcal{T}| \to \infty$, we can say $y \sim e^{-m/|\mathcal T|}$ by the squeeze theorem. The $O\left(\frac{m}{|\mathcal T|^2}\right)$ term from the Taylor expansion confirms relative error $O\left(\frac{m}{|\mathcal{T}|^2}\right) \to 0$ when $m = o(|\mathcal T|)$ (which is the case in our work, since $m$ should be negligible when compared to $|\mathcal T|$ to leverage the submodular upper bounds). Thus $y \approx e^{\left(-\frac{m}{|\mathcal{T}|}\right)}$ and we can simplify the probability term, resulting

\begin{equation}
    \begin{split}
\left(1 - 2\left( 1 - \frac{1}{|\mathcal{T}|} \right)^m + \left( 1- \frac{2}{|\mathcal{T}|} \right)^m\right) &\approx 1 - 2y+y^2 \\
&= (1-y)^2 \\
&= (1-e^{-\frac{m}{|\mathcal{T}|}})^2 \\
&\geq \left(1-e^{-\frac{m}{N^{K+1}}}\right)^2 \qquad \left(\text{Lemma~\ref{lemma:ub_T}}\right). \\
\end{split}
\end{equation}

Therefore, $\left(1 - 2\left( 1 - \frac{1}{|\mathcal{T}|} \right)^m + \left( 1- \frac{2}{|\mathcal{T}|} \right)^m\right) \in \Omega\left( \left(1-e^{-\frac{m}{N^{K+1}}}\right)^2\right)$.
\end{proof}

Since $|\mathcal{T}| \in O\left(N^{K+1}\right)$ (Lemma~\ref{lemma:ub_T}), it is fair to assume $|\mathcal{T}|$ is large and thus Proposition~\ref{prop:lb_distinct_bounds_proba} generally holds. Then, combining Propositions~\ref{prop:lb_alpha},~\ref{prop:lb_beta}
 and~\ref{prop:lb_distinct_bounds_proba}, we get Theorem~\ref{prop:exp_Q}.

\begin{proof}
\begin{equation}
    \begin{split}
            \mathbb{E}[Q(m)] &=  \alpha\beta \left(1 - 2\left( 1 - \frac{1}{|\mathcal{T}|} \right)^m + \left( 1- \frac{2}{|\mathcal{T}|} \right)^m\right) \\
            &=  \Omega\left(N(K+1)!\right) \cdot \Omega\left((N-K)^K\right)\cdot  \Omega\left( \left(1-e^{-\frac{m}{N^{K+1}}}\right)^2\right) \\
            &= \Omega\left(N^{K+1}(K+1)!\left(1-\frac{K}{N}\right)^{K}\left(1 - e^{-\frac{m}{N^{K+1}}}\right)^2\right) \\
            &= \Omega\left(N^CC!\left(1-\frac{C-1}{N}\right)^{C-1}\left(1 - e^{-\frac{m}{N^C}}\right)^2\right)
    \end{split}
\end{equation}
\end{proof}

\subsection{Proof of Theorem~\ref{thm:proba_bound}}
To prove Theorem~\ref{thm:proba_bound}, we must quantify the total pairwise dependence between edges in the graph $G(x)$.

\begin{definition}[Total pairwise dependence between edges]
    Given a graph $G(x)=(\mathcal V, \mathcal{E}(x))$, if $e_i, e_j \in \mathcal{E}(x)$, then $e_i \sim e_j$ denotes that $e_i$ and $e_j$ share a common vertex. Denote $A_i$ the event that $e_i$ has been sampled by the trajectory sampling policy, then the total pairwise dependency is 
    \begin{equation*}
        \nu = \sum_{(i,j):e_i\sim e_j} P(A_i \cap A_j)
    \end{equation*}
\end{definition}

We start by counting the number of edges in $G(x)$ that share a common vertex in order to index the sum. Specifically, there are two types of edges sharing a vertex. We detail and count the number of edges for each type below.

\subsubsection{Type 1: Single parent pairs of edges}
For a given $x$ and one of its parent $p_{x,a}$, then we previously established that $\lambda$ trajectories pass through $p_{x,a}$ without terminating in $x$. Now, each of the $\beta$ compatible trajectories forms an edge with the $\lambda$ previous trajectories. As such, taking any two trajectories for the $\lambda$ parent trajectories and taking one trajectory from the $\beta$ compatible trajectories produces a pair of edges with a shared vertex (the chosen compatible trajectory). Similarly, taking two trajectories in the $\beta$ compatible trajectories and one in the $\lambda$ parent trajectories also produces two edges with shared vertex. Figure~\ref{fig:lambda_pairs} illustrates a small example of trajectories and the edges linking them. We can then deduce the following lemma.

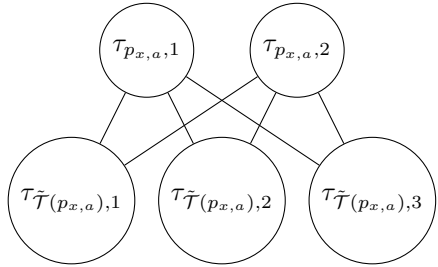
\begin{figure}[h]
    \centering
        \centering
        \begin{tikzpicture}
            \tikzset{vertex/.style = {circle, draw, minimum size=1cm}}
            \tikzset{edge/.style = {- = latex}}

            \node[vertex] (l1) at (-1, 0) {$\tau_{{p_{x,a}},1}$};
            \node[vertex] (l2) at (1,0) {$\tau_{{p_{x,a}},2}$};

            \node[vertex] (s4) at (-2, -2) {$\tau_{\tilde{\mathcal{T}}(p_{x,a}),1}$};
            \node[vertex] (s5) at (0, -2) {$\tau_{\tilde{\mathcal{T}}(p_{x,a}),2}$};
            \node[vertex] (s3) at (2, -2) {$\tau_{\tilde{\mathcal{T}}(p_{x,a}),3}$};

            \draw[edge] (l1) -- (s3);
            \draw[edge] (l1) -- (s4);
            \draw[edge] (l1) -- (s5);
            \draw[edge] (l2) -- (s3);
            \draw[edge] (l2) -- (s4);
            \draw[edge] (l2) -- (s5);
        \end{tikzpicture}
        \caption{Any pair of trajectories $\tau_{{p_{x,a}},1}$ and $\tau_{{p_{x,a}},2}$ passing through a parent $p_{x,a}$ have edges that share common vertices. These vertices are trajectories $\tau_{\tilde{\mathcal{T}}(p_{x,a}),i} \in \tilde{\mathcal{T}}(p_{x,a})$.}
        \label{fig:lambda_pairs}
\end{figure}

\begin{lemma}
\label{lemma:type1}
There are $(K+1)\left[\binom{\lambda}{2}\beta + \binom{\beta}{2}\lambda\right]$ Type 1 pairs of edges sharing a vertex.
\end{lemma}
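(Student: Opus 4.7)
The plan is to fix an arbitrary parent $p_{x,a}$ of $x$ and observe that the edges of $G(x)$ associated with this single parent form a complete bipartite subgraph between $\mathcal{T}(p_{x,a})$ (of size $\lambda$ by Proposition~\ref{prop:lambda_value}) and $\tilde{\mathcal{T}}(p_{x,a})$ (of size $\beta$ by Proposition~\ref{prop:beta_value}). Within a complete bipartite graph, two distinct edges share a vertex if and only if the shared vertex lies entirely on one side, so I would split the count of adjacent edge pairs according to which side holds the shared vertex.

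For the first case, I would pick one of the $\lambda$ parent trajectories to be the shared vertex and then choose any two of the $\beta$ compatible trajectories to complete the two edges, giving $\lambda \binom{\beta}{2}$ pairs. The symmetric case (shared vertex on the compatible side) yields $\beta \binom{\lambda}{2}$ pairs. Summing the two contributions produces the per-parent count $\binom{\lambda}{2}\beta + \binom{\beta}{2}\lambda$, exactly the bracketed expression in the statement.

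The step I expect to be the main obstacle is justifying that summing this per-parent count over the $K+1$ parents of $x$ does not double count any Type 1 pair. For this I would argue that the sets $\{\mathcal{T}(p_{x,a'})\}_{a'\in x}$ are pairwise disjoint: any trajectory is strictly increasing in cardinality and every parent of $x$ has size exactly $K$, so a trajectory can pass through at most one parent of $x$. Consequently each edge of $G(x)$ is unambiguously labeled by the parent through which its first coordinate (its parent-trajectory endpoint) passes, and any Type 1 pair (both edges associated with a common parent) inherits a unique parent label. The per-parent contributions therefore partition the Type 1 pairs, and multiplying by $K+1$ gives the claimed total $(K+1)\left[\binom{\lambda}{2}\beta + \binom{\beta}{2}\lambda\right]$. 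It is worth flagging that the compatible sets $\tilde{\mathcal{T}}(p_{x,a'})$ need not be disjoint across parents — a single trajectory can supply a valid transition $s\cup\{a'\}$ for more than one $a'\in x$ — but because labels are assigned via the parent-trajectory endpoint, this overlap on the compatible side does not cause any Type 1 pair to be counted twice.
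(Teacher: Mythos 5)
Your proposal is correct and follows essentially the same route as the paper's proof: count adjacent edge pairs within the complete bipartite graph between $\mathcal{T}(p_{x,a})$ and $\tilde{\mathcal{T}}(p_{x,a})$ by which side holds the shared vertex, obtaining $\binom{\lambda}{2}\beta + \binom{\beta}{2}\lambda$ per parent, then multiply by the $K+1$ parents. Your additional remark that the sets $\mathcal{T}(p_{x,a'})$ are pairwise disjoint (since a trajectory visits exactly one state of cardinality $K$) is a welcome justification for the final multiplication that the paper leaves implicit.
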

\begin{proof}
Let $p_{x,a}$ be a parent of $x$, then $|\mathcal{T}(p_{x,a})| = \lambda$. There are $\binom{\lambda}{2}$ pairs of trajectories in $\mathcal{T}(p_{x,a})$ that can be coupled with any one of their compatible trajectories in $\tilde{\mathcal{T}}(p_{x,a})$. Applying the same reasoning for the pairs in $\tilde{\mathcal{T}}(p_{x,a})$, and remarking there are $(K+1)$ such parents yields our result.
\end{proof}

\subsubsection{Type 2: Extra-parent pairs}
Now, we start with the observation that for two parents of $x$, $p_{x,a}$ and $p_{x,b}$, their sets of compatible trajectories is not mutually exclusive, i.e. $\tilde{\mathcal{T}}(p_{x,a}) \cap \tilde{\mathcal{T}}(p_{x,b}) \neq \emptyset$. As such, any edge with a vertex in $\tilde{\mathcal{T}}(p_{x,a}) \cap \tilde{\mathcal{T}}(p_{x,b})$ can be coupled with trajectories in $\mathcal{T}_{p_{x,a}}$ and $\mathcal{T}_{p_{x,b}}$. As such, these are also edges with a shared vertex. Note that to properly count the number of Type 2 pairs of edges, we must take into account that the procedure proposed below also count some Type 1 pairs. Therefore, we must remove the doubly counted Type 1 pairs in order to get an accurate count, which our procedure explicitly covers in Step 6.

Figure~\ref{fig:phi_pairs} illustrates an example.

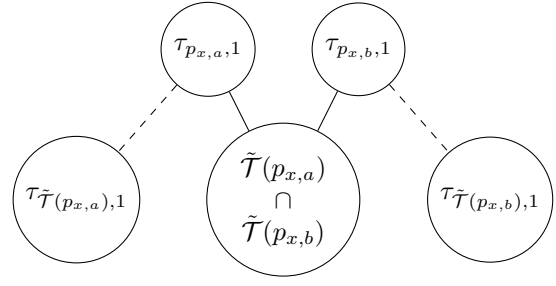
\begin{figure}[ht]
        \centering
        \begin{tikzpicture}
            \tikzset{vertex/.style = {circle, draw, minimum size=1cm}}
            \tikzset{edge/.style = {- = latex}}

            \node[vertex] (l1) at (-1, 0) {$\tau_{{p_{x,a}},1}$};
            \node[vertex] (l2) at (1,0) {$\tau_{{p_{x,b}},1}$};

            \node[vertex] (s4) at (-2.75, -2) {$\tau_{\tilde{\mathcal{T}}(p_{x,a}),1}$};
            \node[vertex,align=center] (s5) at (0, -2) {$\tilde{\mathcal{T}}(p_{x,a})$ \\ $\cap$ \\ $\tilde{\mathcal{T}}(p_{x,b})$};
            \node[vertex] (s3) at (2.75, -2) {$\tau_{\tilde{\mathcal{T}}(p_{x,b}),1}$};

            \draw[edge, dashed] (l1) -- (s4);
            \draw[edge] (l1) -- (s5);
            \draw[edge, dashed] (l2) -- (s3);
            \draw[edge] (l2) -- (s5);
        \end{tikzpicture}
        \caption{Any pair of trajectories $\tau_{{p_{x,a}},1}$ and $\tau_{{p_{x,b}},1}$ passing through different parents of $x$, have edges that share common vertices. These vertices are trajectories in $\tilde{\mathcal{T}}(p_{x,a}) \cap \tilde{\mathcal{T}}(p_{x,b})$.}
        \label{fig:phi_pairs}
\end{figure}

To establish our result, we first count how many vertices are in the intersection.
\begin{lemma}
\label{lemma:phi_value}
    Let $\phi := |\tilde{\mathcal{T}}(p_{x,a}) \cap \tilde{\mathcal{T}}(p_{x,b})|$, then 
    $$
    \phi = 2\left(\frac{N!}{(N-K+1)!} - K!\left(1 + \frac{(K+1)(N-K)}{2}\right)\right) 
    $$
\end{lemma}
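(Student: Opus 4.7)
The plan is to count trajectories $\tau$ lying in both $\tilde{\mathcal{T}}(p_{x,a})$ and $\tilde{\mathcal{T}}(p_{x,b})$ directly, by conditioning on the positions of $a$ and $b$ in $\tau$. I would first re-express membership in $\tilde{\mathcal{T}}(p_{x,c})$, for $c \in x$, as three simple conditions on $\tau = (t_1, \ldots, t_{K+1})$: the element $c$ occurs at some position $\ell \leq K$, every element strictly preceding $c$ lies in $x \setminus \{c\}$, and $\tau$ does not terminate at $x$. Applying this to both $c = a$ and $c = b$ simultaneously, membership in the intersection becomes equivalent to requiring that $a$ and $b$ appear at positions at most $K$, that every element preceding the earlier of them is drawn from $x \setminus \{a,b\}$, that every element strictly between them is also drawn from the remainder of $x \setminus \{a,b\}$, and that the trajectory avoids terminating at $x$.

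Next, I would partition the count by the relative order of $a$ and $b$. Assuming without loss of generality that $a$ precedes $b$, and recovering a factor of $2$ by symmetry, fix their positions $1 \leq i < j \leq K$ with $t_i = a$ and $t_j = b$. The constraints force the $j-2$ positions of $\{1,\ldots,j-1\}\setminus\{i\}$ to hold distinct elements of $x \setminus \{a,b\}$ (a set of size $K-1$), while positions $j+1,\ldots,K+1$ must draw $K+1-j$ distinct elements from $\mathcal{A} \setminus \{t_1,\ldots,t_j\}$ (size $N-j$), excluding the orderings that would complete $\tau$ into $x$. Counting ordered prefixes as $(K-1)!/(K+1-j)!$, ordered suffixes as $(N-j)!/(N-K-1)! - (K+1-j)!$, and the $(j-1)$ valid placements of $i$ inside $[1,j-1]$, I obtain the raw sum
\begin{equation*}
\phi = 2 \sum_{j=2}^{K} (j-1)\, \frac{(K-1)!}{(K+1-j)!} \left( \frac{(N-j)!}{(N-K-1)!} - (K+1-j)! \right).
\end{equation*}

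Finally, I would collapse this sum into the stated closed form by splitting it along its two summands, reindexing with $m = K+1-j$ so that the factorials turn into binomial coefficients of the shape used in the proof of Proposition~\ref{prop:beta_value}, and invoking the Hockey Stick Identity together with a weighted variant to absorb the $(j-1)$ factor. I expect this algebraic collapse to be the main obstacle, since the weighting by $(j-1)$ does not match a direct Hockey Stick invocation. A natural workaround is to split $(j-1) = j - 1$ and apply the identity to each piece separately, using the auxiliary identity $k\binom{n}{k} = n\binom{n-1}{k-1}$ on the first-summand contribution, before recombining with the boundary terms at $j=2$ and $j=K$ to recover the target expression $2\prod_{i=0}^{K}(N-i) - K!\bigl((N-K)(K-1) - 2\bigr)$.
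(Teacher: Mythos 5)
Your combinatorial setup is essentially a reindexing of the paper's own argument: conditioning on the positions $i<j$ of $a$ and $b$ with multiplicity $j-1$ reproduces exactly the paper's double sum over $(L_1,L_2)$ with $j=L_1+L_2+2$, and your prefix count $(K-1)!/(K+1-j)!$ agrees with theirs. The genuine gap is in the suffix subtraction. You remove only the $(K+1-j)!$ orderings that complete the trajectory into $x$ itself, i.e., you compute the literal intersection $\tilde{\mathcal{T}}(p_{x,a})\cap\tilde{\mathcal{T}}(p_{x,b})$ as Definition~\ref{def:beta} reads. The paper instead subtracts $(N-K)(K+1-j)!$: \emph{every} suffix whose penultimate state is a parent of $x$, not just those terminating in $x$. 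It does so deliberately (step 6 of its proof), because a trajectory passing through a third parent $p_{x,c}$ already belongs to $\mathcal{T}(p_{x,\cdot})$ and would be double-counted against the Type 1 pairs in Lemma~\ref{lemma:type2}; the closed form asserted in the lemma is the one carrying the $(N-K)$ factor.

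Concretely, your raw sum evaluates to $2(K-1)!\sum_{j=2}^{K}(j-1)\bigl[\binom{N-j}{K+1-j}-1\bigr]$, whereas the paper's sum is $2(K-1)!\sum_{j=2}^{K}(j-1)\bigl[\binom{N-j}{K+1-j}-(N-K)\bigr]$. The discrepancy is $2(K-1)!(N-K-1)\sum_{j=2}^{K}(j-1)=K!(K-1)(N-K-1)$, and no Hockey-Stick manipulation will absorb it: the step you flag as the main obstacle (collapsing the weighted sum) is not where the proof fails; the integrand itself is off. Once you adopt the $(N-K)$ subtraction, the $(j-1)$ weight is handled most cleanly the paper's way, by keeping the double sum over $(L_1,L_2)$ and applying the Hockey Stick Identity once per index rather than inventing a weighted variant. (As a side note, the paper's final displayed line does not follow from its own penultimate line --- the product index and the sign of the $2$ are off --- but that is an error in the paper's algebra, independent of the gap in your proposal.)
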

\begin{proof}
To quantify $\phi$, we show our reasoning as a sequence of steps. Fix $x$ and take $p_{x,a}:= x \setminus a$ and $p_{x,b}:= x \setminus b$, two parents of $x$. Notice that $|p_{x,a} \cap p_{x,b}| = K-1$. A trajectory can be compatible with both $p_{x,a}$ and $p_{x,b}$ if it follows the following steps:
\begin{enumerate}
    \item Pick any permutation of $L_1 \leq K-2$ elements out of the $K-1$ common elements between $p_{x,a}$ and $p_{x,b}$.
\begin{itemize}
    \item $L \leq K - 2$ since we need to preserve space for $a$ and $b$ to be added.
	\item $|s| = L_1$
	\item  $s \subset (p_{x,a} \cap p_{x,b})$
	\item That is $(K-1)!/(K-1-L_1)!$ permutations.
\end{itemize}
    
\item Pick either $a$ or $b$ to add to the set. w.l.o.g. Let's assume $a$ was picked. Switch $a,b$ and $p_{x,a},p_{x,b}$ if $b$ is picked.
\begin{itemize}
	\item $|s| = L_1 + 1$
	\item This trajectory is now compatible with $p_{x,a}$
	\item $s \subset p_{x,b}$, $s \not\subset p_{x,a}$
	\item Since there are two choices ($a$ or $b$) here, we will multiply by 2 our count in the final result.
\end{itemize}

\item Add $L_2 \leq K - L_1 - 2$ elements out of $K - L_1 - 1$ remaining elements in $p_{x,b}$
\begin{itemize}
	\item $K - L_1 - 2= K - (L_1 + 1) - 1$, so that there is still space to add $b$ to the set.
	\item $|s| = L_1 + 1 + L_2$
	\item $s \subset p_{x,b}$
	\item That is $(K-L_1-1)!/(K-L_1-1-L_2)!$ permutations.
\end{itemize}

\item Add $b$ to the set.
\begin{itemize}
	\item $|s| = L_1 + L_2 +2$
	\item $s \not\subset p_{x,b}$
	\item This trajectory now gives a bound via $p_{x,b} \cup b$ 
\end{itemize}

\item Add any element until $|s| = K+1$.
\begin{itemize}
	\item That is $(N-L_1-L_2-2)!/(N-K-1)!$ permutations, that must be multiplied.
\end{itemize}

\item  We need to subtract sub-trajectories passing through other parents of $x$ to avoid double counting in the set of Type 1 edges. This is because for a parent $p_{x,a}$, $\mathcal{T}(p_{x,\cdot}) \subset \tilde{\mathcal{T}}(p_{x,a})$ for any other parent $p_{x,\cdot}$. Therefore, we must subtract the following number of trajectories:
\begin{itemize}
    \item  There are $L_1 + L_2 + 2$ elements in $x$ that have already been taken.
    \item  Choosing $K - L_1 - L_2 -2$ elements in the set of $K+1 - L_1 - L_2 -2$ non-chosen elements in $x$ will result in a parent of $x$ in the trajectory.
    \item  $\frac{(K-L_1-L2-1)!}{(K-L_1-L2-1 - K +L_1 + L_2 + 2)!} = (K-L_1-L_2-1)!$ sub-trajectories leading to parents of $x$ from $s$.
    \item  Once a parent is reached, $|s| = K$. Then $N-K$ remaining choices to terminate trajectory.
    \item  $(N-K)(K-L_1-L_2-1)!$ trajectories to subtract.
\end{itemize}
\end{enumerate}

So, summing across $L_1$ and $L_2$, and subtracting the trajectories leading to other parents of $x$, we get: 
\begin{equation}
\begin{split}
\phi&= 2 \sum_{L_1=0}^{K-2} \frac{(K-1)!}{(K-1-L_1)!}  \\
&\quad\quad  \sum_{L_2=0}^{K-L_1-2} \frac{(K-L_1-1)!}{(K-L_1-L_2-1)!} \left[\frac{(N-L_1-L_2-2)!}{(N-K-1)!} - (N-K)(K-L_1-L_2-1)!\right] \\ 
&= 2 (K-1)!\sum_{L_1=0}^{K-2}    \sum_{L_2=0}^{K-L_1-2} {} \left[\binom{N-L_1-L_2-2}{K-L_1-L_2-1} - (N-K)\right].
\end{split}
\end{equation}

We can simplify $\phi$ further by once again making use of the HSI and sum identities and get
\begin{equation}
\begin{aligned}
\phi &= 2(K-1)!\left(\binom{N}{K-1} - K\left(1 + \frac{(K+1)(N-K)}{2}\right)\right) \\
&= 2\left(\frac{N!}{(N-K+1)!} - K!\left(1 + \frac{(K+1)(N-K)}{2}\right)\right), \\
\end{aligned}
\end{equation}
\end{proof}

We can now state the number of Type 2 pairs of edges.
\begin{lemma}
\label{lemma:type2}
There are $\binom{K+1}{2}\lambda^2\phi$ Type 2 pairs of edges sharing a vertex.
\end{lemma}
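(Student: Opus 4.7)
The plan is to enumerate pairs of edges in $G(x)$ whose shared vertex is a trajectory lying in the intersection of the compatible sets of two distinct parents of $x$. First I would note that each edge $e \in \mathcal{E}(x)$ is uniquely associated with a parent $p_{x,a}$: by construction its endpoints split into a parent trajectory in $\mathcal{T}(p_{x,a})$ and a compatible trajectory in $\tilde{\mathcal{T}}(p_{x,a})$. Moreover, a trajectory in $\mathcal{T}(p_{x,a})$ cannot simultaneously lie in $\mathcal{T}(p_{x,b})$ for a distinct parent $p_{x,b}$: such a trajectory would reach $p_{x,a}$ after exactly $K$ steps, and its unique remaining action would have to land in another parent, forcing it either to add an element already present in $p_{x,a}$ (impossible in a DAG on sets) or to add $a$ itself (which terminates at $x$, contradicting Definition~\ref{def:alpha}). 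Hence parent trajectories are disjoint across parents of $x$.

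It follows that two edges coming from distinct parents $p_{x,a}$ and $p_{x,b}$ can share a vertex only on the compatible side, i.e., through some trajectory $\tau \in \tilde{\mathcal{T}}(p_{x,a}) \cap \tilde{\mathcal{T}}(p_{x,b})$. I would then count Type 2 pairs by the following three-stage selection. First, choose an unordered pair of distinct parents of $x$, which gives $\binom{K+1}{2}$ options since $x$ admits $K+1 = C$ parents. Second, choose the shared compatible trajectory $\tau \in \tilde{\mathcal{T}}(p_{x,a}) \cap \tilde{\mathcal{T}}(p_{x,b})$, which gives $\phi$ options by Lemma~\ref{lemma:phi_value}. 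Third, choose one parent trajectory from $\mathcal{T}(p_{x,a})$ and one from $\mathcal{T}(p_{x,b})$, which yields $\lambda \cdot \lambda = \lambda^2$ options by Proposition~\ref{prop:lambda_value}. Multiplying gives the claimed count $\binom{K+1}{2}\lambda^2\phi$.

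The main obstacle is verifying that this enumeration is in bijection with the set of Type 2 pairs, i.e., that no overcounting occurs. Each resulting pair of edges uniquely recovers its generating data: the two parents are read off from the parent sides of the two edges (by the first observation), the shared trajectory is the unique common endpoint, and the two parent trajectories are the remaining endpoints. Hence the map from selections to Type 2 pairs is injective, and it is surjective by the characterization of cross-parent shared vertices. Disjointness from Type 1 is automatic, since Type 1 pairs live within the subgraph induced by a single parent. Assembling these observations yields $\binom{K+1}{2}\lambda^2\phi$ as the total number of Type 2 pairs.
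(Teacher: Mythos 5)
Your counting is the same as the paper's: fix one of the $\binom{K+1}{2}$ unordered pairs of parents, pick the shared vertex among the $\phi$ trajectories of $\tilde{\mathcal{T}}(p_{x,a}) \cap \tilde{\mathcal{T}}(p_{x,b})$, and pick one of the $\lambda$ parent trajectories on each side. Your added observations that $\mathcal{T}(p_{x,a}) \cap \mathcal{T}(p_{x,b}) = \emptyset$ for distinct parents (a trajectory visits exactly one state of cardinality $K$) and that the selection data can be recovered from the resulting pair of edges are correct and make the enumeration slightly more rigorous than the paper's one-line version.

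However, one step is false as stated: the claim that two edges from distinct parents ``can share a vertex only on the compatible side.'' Disjointness of the parent-trajectory sets does not imply this, because a parent trajectory for $p_{x,b}$ is itself a compatible trajectory for $p_{x,a}$: any $\tau \in \mathcal{T}(p_{x,b})$ adds $a$ while its current state is still a strict subset of $p_{x,a}$, so $\mathcal{T}(p_{x,b}) \subset \tilde{\mathcal{T}}(p_{x,a})$ (an inclusion the paper itself records in the proof of Lemma~\ref{lemma:phi_value}). One can therefore form $e_1 = (\tau_1,\tau_2)$ with $\tau_1 \in \mathcal{T}(p_{x,a})$ and $\tau_2 \in \mathcal{T}(p_{x,b}) \subset \tilde{\mathcal{T}}(p_{x,a})$, and $e_2 = (\tau_2,\tau_3)$ with $\tau_3 \in \tilde{\mathcal{T}}(p_{x,b})$. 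These share $\tau_2$, which sits on the compatible side of $e_1$ but the parent side of $e_2$; since $\tau_2$ never adds $b$, it is not in $\tilde{\mathcal{T}}(p_{x,b})$, so this pair is neither Type 1 nor of the form you enumerate. This does not change the count of Type 2 pairs as the paper defines that family --- your map is a bijection onto it and the stated formula is exactly the paper's --- but your ``surjectivity'' onto all cross-parent vertex-sharing pairs does not hold, so the proof should not claim (nor does it need) that Types 1 and 2 exhaust the dependent pairs.
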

\begin{proof}
Let $p_{x,a}$ and $p_{x,b}$ be two parents of $x$. One can choose any trajectory in $\mathcal{T}(p_{x,a})$, another in $\mathcal{T}(p_{x,b})$ and lastly a third in $\tilde{\mathcal{T}}(p_{x,a}) \cap \tilde{\mathcal{T}}(p_{x,b})$. Recalling $|\mathcal{T}(p_{x,\cdot})| = \lambda$, then there are $\lambda^2 \phi$ pairs of edges with a shared vertex by pair of parents. Since there are $\binom{K+1}{2}$ pairs of parents, we get our result.
\end{proof}

\subsubsection{Probability of sampling two edges with a common vertex}
It remains to properly quantify $P(A_i \cap A_j)$. The following lemma gives our result.
\begin{lemma}
\label{lemma:proba_pairwisedep}
    Let $P(A_i \cap A_j)$ be the probability of sampling two edges in $G(x)$ such that one vertex is shared by both edges. Then
    $$
     P(A_i\cap A_j) \approx (1 - e^{-\frac{m}{|\mathcal{T}|}})^3.
    $$
\end{lemma}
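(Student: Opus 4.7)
The plan is to mirror the reasoning used in Proposition~\ref{prop:lb_distinct_bounds_proba}, but now applied to three trajectories instead of two. Two edges $e_i, e_j \in \mathcal{E}(x)$ that share a common vertex involve exactly three distinct trajectories $\tau_a, \tau_b, \tau_c \in \mathcal{T}$, where $\tau_c$ is the shared endpoint and $\tau_a, \tau_b$ are the non-shared endpoints. The event $A_i \cap A_j$ thus corresponds to all three of $\tau_a, \tau_b, \tau_c$ being present at least once in the $m$ uniformly sampled trajectories.

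First, I would express $P(A_i \cap A_j)$ via inclusion-exclusion over the complementary events $E_{\tau_a}, E_{\tau_b}, E_{\tau_c}$ (the events that each specific trajectory is missing from the sample). Under Assumption~\ref{ass:uniform_pf}, the probability of missing any given set of $k$ specified trajectories across $m$ draws is $(1 - k/|\mathcal{T}|)^m$. Taking the complement and expanding gives
\begin{equation*}
P(A_i \cap A_j) = 1 - 3\left(1 - \tfrac{1}{|\mathcal{T}|}\right)^m + 3\left(1 - \tfrac{2}{|\mathcal{T}|}\right)^m - \left(1 - \tfrac{3}{|\mathcal{T}|}\right)^m.
\end{equation*}

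Next, I would apply the same Maclaurin series argument as in Equation~\eqref{eq:proba_simplification}. Setting $y = (1 - 1/|\mathcal{T}|)^m \approx e^{-m/|\mathcal{T}|}$, and observing that $(1 - k/|\mathcal{T}|)^m \approx e^{-km/|\mathcal{T}|} = y^k$ when $|\mathcal{T}|$ is large (which holds since $|\mathcal{T}| \in O(N^{K+1})$ by Lemma~\ref{lemma:ub_T}), the inclusion-exclusion expression collapses into
\begin{equation*}
P(A_i \cap A_j) \approx 1 - 3y + 3y^2 - y^3 = (1-y)^3 \approx \left(1 - e^{-m/|\mathcal{T}|}\right)^3,
\end{equation*}
which is the claimed expression.

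The only non-mechanical step is recognizing that two distinct edges sharing one vertex necessarily involve three (not four, not two) distinct trajectories, so that the inclusion-exclusion has exactly three terms of the relevant type; Definitions~\ref{def:alpha} and~\ref{def:beta} guarantee this because the edge sets in $G(x)$ are defined on ordered (parent, compatible) pairs of genuinely distinct trajectories. Everything else is a direct rerun of the simplification already carried out for pairs in Proposition~\ref{prop:lb_distinct_bounds_proba}, so I do not anticipate a real obstacle beyond bookkeeping the signs of the inclusion-exclusion expansion.
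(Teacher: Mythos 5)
Your proposal matches the paper's own proof essentially step for step: both reduce $P(A_i\cap A_j)$ to the event that the three distinct trajectories forming the two edges all appear among the $m$ samples, apply inclusion--exclusion to the complementary "missing trajectory" events to get $1 - 3\left(1-\tfrac{1}{|\mathcal{T}|}\right)^m + 3\left(1-\tfrac{2}{|\mathcal{T}|}\right)^m - \left(1-\tfrac{3}{|\mathcal{T}|}\right)^m$, and then collapse it to $(1-y)^3$ via the same large-$|\mathcal{T}|$ exponential approximation used in Proposition~\ref{prop:lb_distinct_bounds_proba}. No gaps; this is the paper's argument.
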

\begin{proof}
    Now, $P(A_i \cap A_j)$ can be thought of as the probability of sampling 3 vertices (e.g. $(\tau_i,\tau_j, \tau_k)$ if $e_i=(\tau_i, \tau_k)$ and $e_j=(\tau_j, \tau_k)$). Following Assumption~\ref{ass:uniform_pf}, the probability of sampling any specific trajectory in $m$ attempts is $(1/|\mathcal{T}|)^m$ . Let $E_i$ be the event that $\tau_i \in \mathcal{V}$ has not been sampled after $m$ trajectories, then by the inclusion-exclusion principle we have

\begin{equation}
    \begin{split}
        P(E_i \cup E_j \cup E_k) &= P(E_i) + P(E_j) + P(E_k) - P(E_i\cap E_j) - P(E_i\cap E_k) - P(E_j\cap E_k) + P(E_i\cap E_j \cap E_k) \\
        &=3 \left( 1 - \frac{1}{|\mathcal{T}|} \right)^m - 3\left( 1 - \frac{2}{|\mathcal{T}|} \right)^m + \left( 1 - \frac{3}{|\mathcal{T}|} \right)^m \\
        &=3 \left( 1 - \frac{1}{|\mathcal{T}|} \right)^m - 3\left( 1 - \frac{2}{|\mathcal{T}|} \right)^m + \left( 1 - \frac{3}{|\mathcal{T}|} \right)^m \\
    \end{split}
\end{equation}

Therefore, the probability of all three vertices occurring (coinciding with the probability of two edges with a shared vertex being sampled)

\begin{equation}
    \begin{split}
        P(A_i\cap A_j) &= P(\neg (E_i \cup E_j \cup E_k)) \\
        &=1 - \left(3 \left( 1 - \frac{1}{|\mathcal{T}|} \right)^m - 3\left( 1 - \frac{2}{|\mathcal{T}|} \right)^m + \left( 1 - \frac{3}{|\mathcal{T}|} \right)^m\right). \\
    \end{split}
\end{equation}

With $y = \left(1-\frac{1}{|\mathcal{T}|}\right)^m$, we use the simplification trick in the proof of Proposition~\ref{prop:lb_distinct_bounds_proba} and get
\begin{equation}
    \begin{split}
        P(A_i\cap A_j) &= 1 - 3y +3y^2 - y^3 \\
        & = (1 - y)^3 \\
        & \approx (1 - e^{-\frac{m}{|\mathcal{T}|}})^3, \\
    \end{split}
\end{equation}
\end{proof}
We are now ready to quantify the total pairwise dependency in $G(x)$ after $m$ samples are taken.
\begin{lemma}[Total pairwise dependency]
\label{lemma:nu}
Let $\nu(m)$ denote the total pairwise dependency in the graph after $m$ samples, then

\begin{equation*}
    \begin{aligned}
          \nu(m) &=  (K+1)\left(\lambda(\lambda-1)\beta + \beta(\beta-1)\lambda + K\lambda^2 \phi\right) 
   \left(1 - \left(3 \left( 1 - \frac{1}{|\mathcal{T}|} \right)^m - 3\left( 1 - \frac{2}{|\mathcal{T}|} \right)^m + \left( 1 - \frac{3}{|\mathcal{T}|} \right)^m\right)\right) \\
  &\approx (K+1)\left(\lambda(\lambda-1)\beta + \beta(\beta-1)\lambda + K\lambda^2 \phi\right)\left(1-e^{-\frac{m}{|\mathcal{T}|}}\right)^3
    \end{aligned}
\end{equation*}

\end{lemma}
\begin{proof}
    We directly combine Lemmas~\ref{lemma:type1}, ~\ref{lemma:type2} and~\ref{lemma:proba_pairwisedep} and simplify. Note that the total number of edges with a shared vertex is $2\left((K+1)\left[\binom{\lambda}{2}\beta + \binom{\beta}{2}\lambda\right] +\binom{K+1}{2}\lambda^2\phi\right)$, as the total pairwise dependency requires us to account for both $(e_i, e_j)$ and $(e_j, e_i)$ when summing and so
    \begin{equation}
        \begin{split}
        \nu(m) &= \sum_{(e_i, e_j):e_i \sim e_j} P(A_i\cap A_j) \\
            &=2\left((K+1)\left[\binom{\lambda}{2}\beta + \binom{\beta}{2}\lambda\right] +\binom{K+1}{2}\lambda^2\phi\right) \left(1 - \left(3 \left( 1 - \frac{1}{|\mathcal{T}|} \right)^m - 3\left( 1 - \frac{2}{|\mathcal{T}|} \right)^m + \left( 1 - \frac{3}{|\mathcal{T}|} \right)^m\right)\right) \\
            &=(K+1)\left(\lambda(\lambda-1)\beta + \beta(\beta-1)\lambda + K\lambda^2 \phi\right) \left(1 - \left(3 \left( 1 - \frac{1}{|\mathcal{T}|} \right)^m - 3\left( 1 - \frac{2}{|\mathcal{T}|} \right)^m + \left( 1 - \frac{3}{|\mathcal{T}|} \right)^m\right)\right) \\
        \end{split}
    \end{equation}
\end{proof}

\subsubsection{Proof of Main Result}
Combining Lemmas~\ref{lemma:nu} and Theorem~\ref{thm:true_Q}, we leverage Janson's Inequalities~\citep{bollobasChromaticNumberRandom1988, jansonExponentialBoundProbability1988}, yielding the following result.

\begin{theorem}
\label{thm:true_Q_proba}
    Let $Q(m)$ be the number of bounds on $x$ after $m$ trajectories, then 

    $$\mathbb{P}(Q(m) > 0) \geq 1 - \min\left\{e^{-\mathbb{E}[Q(m)] + \nu(m) / 2}, e^{-\mathbb{E}[Q(m)]^2/(\nu(m) + \mathbb{E}[Q(m)])}\right\}$$,

    where $\mathbb{E}[Q(m)]$ and $\nu(m)$ are as in Theorem~\ref{thm:true_Q} and Lemma~\ref{lemma:nu} respectively.
\end{theorem}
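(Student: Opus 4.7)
The plan is to recognize the statement as a direct application of Janson's two inequalities to the family of edge-realization events that define $Q(m)$, feeding in the first and second moment quantities already computed in Theorem~\ref{thm:true_Q} and Lemma~\ref{lemma:nu}.

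First I would fix an arbitrary terminating state $x$ and work inside the pairing graph $G(x) = (\mathcal{V}, \mathcal{E}(x))$. For every edge $e_i \in \mathcal{E}(x)$ let $A_i$ denote the event ``both trajectories incident to $e_i$ appear in the sampled dataset $W$ of $m$ trajectories''. By Property~\ref{obs:subo} and Definitions~\ref{def:alpha}--\ref{def:beta}, $A_i$ is exactly the event that the particular submodular upper bound associated with $e_i$ is producible from $W$, so $Q(m) = \sum_i \mathbb{1}_{A_i}$ and $\mu := \mathbb{E}[Q(m)]$ is furnished by Theorem~\ref{thm:true_Q}. Because each $A_i$ depends only on whether two specific trajectories of $\mathcal{T}$ appear in the i.i.d.\ uniformly sampled dataset (Assumption~\ref{ass:uniform_pf}), the events $A_i$ and $A_j$ are independent whenever the edges $e_i, e_j$ share no vertex; only adjacent edges in $G(x)$ can be correlated, and those correlations are precisely what $\nu(m)$ in Lemma~\ref{lemma:nu} collects. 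This is exactly the dependence structure required by Janson's inequalities.

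Next I would invoke the two standard forms of Janson's inequality (\citet{jansonExponentialBoundProbability1988}, \citet{bollobasChromaticNumberRandom1988}). Expressed in terms of our ordered-pair $\nu(m)$, they give
\[
\mathbb{P}(Q(m) = 0) \leq \exp\!\bigl(-\mu + \nu(m)/2\bigr) \quad \text{and} \quad \mathbb{P}(Q(m) = 0) \leq \exp\!\bigl(-\mu^2/(\mu + \nu(m))\bigr).
\]
Taking complementary probabilities and retaining the tighter of the two upper bounds yields the stated inequality.

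The main (and essentially only) obstacle is bookkeeping: confirming that our ordered-sum definition of $\nu(m)$ matches the convention under which the Janson bounds are stated in the references, so that the $1/2$ and $1$ factors in the two exponents come out exactly as written, and verifying that edge pairs without a shared vertex truly yield independent events. The first is a single factor-of-two check against the reference; the second follows because, under i.i.d.\ uniform sampling of trajectories, the appearance of any one trajectory in $W$ is independent of the appearance of any other distinct trajectory. Once these conventions are aligned, the theorem reduces to direct substitution of $\mu$ and $\nu(m)$ into the Janson bounds.
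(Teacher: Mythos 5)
Your proposal matches the paper's proof exactly: the paper likewise treats $Q(m)$ as a sum of indicator variables over the edges of $G(x)$, feeds the first moment from Theorem~\ref{thm:true_Q} and the ordered-pair dependency sum $\nu(m)$ from Lemma~\ref{lemma:nu} into the two standard forms of Janson's inequality, and takes the minimum of the resulting bounds on $\mathbb{P}(Q(m)=0)$. Your bookkeeping checks (the ordered-pair convention giving the $\nu(m)/2$ factor, and the independence of non-adjacent edge events) are precisely the conventions the paper adopts implicitly, so nothing further is needed.
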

\subsubsection{Asymptotic Lower Bound for Theorem~\ref{thm:true_Q_proba}}
To produce an asymptotic lower bound for $P(Q(m) > 0)$, we must investigate when the exponents in the $\min$ are the least negative. This means $\mathbb{E}[Q(m)]$ should be small and $\nu(m)$ should be large. As we already have an asymptotic lower bound on $\mathbb{E}[Q(m)]$, we only need an asymptotic upper-bound on $\nu(m)$.

For this, we first need asymptotic upper bounds on $\lambda, \beta$ and $\phi$.

\begin{lemma}[Asymptotic upper bound on $\lambda$]
    Let $\lambda$ be the number of trajectories passing through a parent $p_{x,\cdot}$, then $\lambda \in O(NK!)$.
\end{lemma}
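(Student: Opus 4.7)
The statement to prove is that $\lambda \in O(NK!)$, where $\lambda$ is the number of trajectories passing through a parent $p_{x,\cdot}$. My plan is to reduce this directly to the closed-form expression already established in Proposition~\ref{prop:lambda_value}, which gives $\lambda = K!(N-K-1)$. From there, the asymptotic bound is an almost immediate consequence of the trivial inequality $N - K - 1 \leq N$.

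Concretely, I would proceed as follows. First, invoke Proposition~\ref{prop:lambda_value} to rewrite $\lambda$ as $K!(N-K-1)$. Second, bound the factor $N-K-1$ above by $N$ (valid for all $N \geq K+1$, which is required anyway by Assumption~\ref{ass:max_k} since $K \leq N/2$ implies $N-K-1 \geq N/2 - 1 \geq 0$ for $N \geq 2$). Third, multiply through by $K!$ to obtain $\lambda \leq N K!$. Fourth, conclude by the definition of the big-$O$ notation (with constant $c = 1$ and threshold $N_0 = K+1$) that $\lambda \in O(NK!)$.

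There is essentially no obstacle here: unlike the lower-bound analyses for $\alpha$ and $\beta$ (which required Stirling's approximation or a limit argument to compare polynomial and exponential growth), the upper bound for $\lambda$ is purely algebraic because $\lambda$ already factors cleanly as $K!$ times a linear term in $N$. The only subtlety worth noting explicitly is that the asymptotic is taken as $N$ grows, with $K$ allowed to scale as well, but since $N-K-1 \leq N$ holds uniformly, the constant in the big-$O$ does not depend on $K$.
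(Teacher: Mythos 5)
Your proof is correct and follows essentially the same route as the paper: both start from the closed form $\lambda = K!(N-K-1)$ of Proposition~\ref{prop:lambda_value} and bound $N-K-1 \leq N$ to conclude $\lambda \leq NK!$. The paper merely splits the inequality into two trivial steps ($N-K-1 \leq N-K \leq N$), which changes nothing substantive.
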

\begin{proof}
    \begin{equation}
        \begin{split}
\lambda &= K!(N-K-1) \\
&\leq K!(N-K) \\
&\leq K!N \\ 
&\implies \lambda \in O(NK!) 
        \end{split}
    \end{equation}
\end{proof}

\begin{lemma}[Asymptotic upper bound on $\beta$]
    Let $\beta$ be the number of compatible trajectories for a parent $p_{x,\cdot}$, then $\beta \in O\left(N^K\right)$.
\end{lemma}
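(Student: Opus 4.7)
The plan is to bound $\beta = \prod_{i=0}^{K-1}(N-i) - (K+1)!$ from above by $N^K$ (up to constants), which is the natural analogue of the lower bound already established in Lemma~\ref{prop:lb_beta}. The key observation is that the subtracted constant $(K+1)!$ only makes $\beta$ smaller, so the dominant term is simply the product $\prod_{i=0}^{K-1}(N-i)$, which is a polynomial of degree exactly $K$ in $N$.

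First, I would discard the negative term by noting $\beta \leq \prod_{i=0}^{K-1}(N-i)$. Then I would bound each factor crudely by $N-i \leq N$, giving the chain
\begin{equation*}
\beta \;\leq\; \prod_{i=0}^{K-1}(N-i) \;\leq\; \prod_{i=0}^{K-1} N \;=\; N^K.
\end{equation*}
This immediately yields $\beta \in O(N^K)$ with constant $c = 1$, valid for all $N \geq K+1$ (so that the individual factors remain nonnegative, which is ensured by Assumption~\ref{ass:max_k} since $K \leq N/2$). No Stirling-style argument is needed here because, unlike the lower bound case where one had to rule out the subtracted term dominating, the upper bound is monotone in the sign of the correction.

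There is essentially no obstacle: the result is a one-line computation once one recognizes the product structure. The only subtle point worth stating explicitly is the consistency between the upper and lower bounds, namely that $\beta \in \Theta(N^K)$ combined from Lemma~\ref{prop:lb_beta} (which showed $\beta \in \Omega((N-K)^K)$, and since $(N-K)^K \in \Theta(N^K)$ for fixed $K$ with $K \leq N/2$). This tightness will matter in the subsequent application to bounding $\nu(m)$, where one needs the correct exponent of $N$ to obtain the claimed asymptotic form of $\Lambda(m)$ in Theorem~\ref{thm:proba_bound}.
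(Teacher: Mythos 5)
Your proof is correct and follows exactly the paper's argument: drop the subtracted $(K+1)!$ term, then bound each factor of the product by $N$ to obtain $\beta \leq N^K$, hence $\beta \in O\left(N^K\right)$. The additional remarks about tightness and the role of this bound in controlling $\nu(m)$ are accurate but not part of the paper's one-line proof.
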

\begin{proof}
\begin{equation}
\begin{split}
   \beta &= \prod_{i=0}^{K-1} (N-i) - (K + 1)! \\
   &\leq \prod_{i=0}^{K-1} (N-i) \\
   &\leq N^K \\
   &\implies \beta \in O\left(N^K\right)
\end{split}
\end{equation}
\end{proof}

\begin{lemma}[Asymptotic upper bound for $\phi$]
\label{lemma:ub_phi}
     Let $\phi := |\tilde{\mathcal{T}}(p_{x,a}) \cap \tilde{\mathcal{T}}(p_{x,a})|$, then $\phi \in O\left(N^{K-1}\right)$.
\end{lemma}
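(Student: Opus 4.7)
The plan is to bound $\phi$ directly from its closed form obtained in Lemma~\ref{lemma:phi_value}, namely
\begin{equation*}
  \phi \;=\; 2\prod_{i=0}^{K}(N-i) \;-\; K!\bigl((N-K)(K-1) - 2\bigr).
\end{equation*}
I would first handle the dominant term. Since every factor in $\prod_{i=0}^{K}(N-i)$ is at most $N$ and there are $K+1$ such factors, one immediately gets $\prod_{i=0}^{K}(N-i) \le N^{K+1}$, hence the leading term is at most $2N^{K+1}$.

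Next, I would take care of the subtracted term $K!\bigl((N-K)(K-1)-2\bigr)$. By Assumption~\ref{ass:max_k} we have $K \le N/2$, so for $K \ge 1$ and $N$ sufficiently large the quantity $(N-K)(K-1)-2$ is non-negative, meaning the subtraction only decreases $\phi$ and the bound $\phi \le 2N^{K+1}$ is preserved. In the degenerate small-$K$ corner where $(N-K)(K-1)-2$ is negative, its absolute value is $O(N)$ with a constant depending only on $K$, which is still dominated by $N^{K+1}$ for $K \ge 1$; this yields a bound of the form $\phi \le 2N^{K+1} + c_K N$ for some constant $c_K$, which is still $O(N^{K+1})$.

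Combining the two bounds gives $\phi \in O(N^{K+1})$ as required, with hidden constant independent of $N$. The argument mirrors, in shape, the asymptotic bounds already established for $\lambda$ and $\beta$ (bounding products of descending factors by $N$ raised to the number of factors, then verifying the subtracted correction is of strictly lower order). I do not expect a real obstacle here: the only subtlety is sign-checking the subtracted term so as to keep the inequality in the correct direction, and Assumption~\ref{ass:max_k} already provides the regime in which this is automatic.
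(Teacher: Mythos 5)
Your proof is correct and follows essentially the same route as the paper's one-line argument: bound each factor of $\prod_{i=0}^{K}(N-i)$ by $N$ and observe that the subtracted $K!$ term is of lower order. If anything you are more careful than the paper, which asserts the literal inequality $\phi \le N^{K+1}$ (dropping the factor of $2$ on the leading term), whereas your bound $\phi \le 2N^{K+1}$ plus a lower-order correction is the one that actually holds; either way the $O\left(N^{K+1}\right)$ conclusion is unaffected.
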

\begin{proof}
\begin{equation*}
\begin{aligned}
    \phi&=  2\left(\frac{N!}{(N-K+1)!} - K!\left(1+\frac{(K+1)(N-K)}{2}\right)\right) \\
&\leq \frac{N!}{(N-K+1)!} \\
&\leq N^{K-1}
\end{aligned}
\end{equation*}

\end{proof}

\begin{lemma}[Asymptotic lower bound on total pairwise dependence]
    Let $\nu(m)$ be the total pairwise depdencence between edges in $G(x)$, then  $\nu(N,K,m) \in O\left(KK!N^{2K + 1}\right) \left(1-e^{-\frac{m}{|\mathcal{T}|}}\right)^3.$
\end{lemma}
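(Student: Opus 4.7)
The plan is to start from the explicit expression for $\nu(m)$ given by Lemma~\ref{lemma:nu}, namely
\begin{equation*}
\nu(m) = (K+1)\bigl(\lambda(\lambda-1)\beta + \beta(\beta-1)\lambda + K\lambda^2\phi\bigr)\bigl(1-e^{-m/|\mathcal{T}|}\bigr)^3,
\end{equation*}
and substitute the three asymptotic upper bounds on $\lambda$, $\beta$, and $\phi$ that were just proved in the previous lemmas: $\lambda \in O(NK!)$, $\beta \in O(N^K)$, and $\phi \in O(N^{K+1})$. Since the factor $(1-e^{-m/|\mathcal{T}|})^3 \leq 1$ only depends on $m$ and is kept outside the $O(\cdot)$ in the target statement, it is sufficient to bound the quantity inside the outermost parentheses by $O(K K! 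N^{2K+1})$.

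The next step is to estimate each of the three summands separately: $(K+1)\lambda^2\beta \in O(K(K!)^2 N^{K+2})$, $(K+1)\beta^2\lambda \in O(K K! N^{2K+1})$, and $(K+1)K\lambda^2\phi \in O(K^2(K!)^2 N^{K+3})$. I would then argue that for every fixed $K$ and for $N \to \infty$ (or, more precisely, under Assumption~\ref{ass:max_k} which forces $K \leq N/2$, so $N$ is the leading scale), the middle term $\beta^2\lambda$ is the dominant one. The comparison amounts to showing that $\lambda \leq \beta$ whenever $K \geq 2$, which follows immediately from $\lambda \approx K! \cdot N$ versus $\beta \approx N^K$, and then comparing $\beta^2 \lambda$ against $\lambda^2 \phi$ via the ratio $K!/N^{K-2}$, which is bounded using Stirling's approximation together with the constraint $K \leq N/2$.

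Once the dominant term is identified, the conclusion is purely algebraic: $\nu(m) \in O(KK! N^{2K+1})\bigl(1-e^{-m/|\mathcal{T}|}\bigr)^3$, matching the statement of the lemma. The main obstacle I anticipate is the last comparison, between $\beta^2\lambda$ and $K\lambda^2\phi$, because both mix factorial and polynomial factors in a non-trivial way; here one must be careful to use Assumption~\ref{ass:max_k} to ensure that the factorial $K!$ does not dominate over $N^{K-2}$ in the relevant regime. Everything else is routine substitution and absorbing constants into the $O(\cdot)$.
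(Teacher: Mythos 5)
Your proposal follows essentially the same route as the paper: substitute the upper bounds $\lambda \in O(NK!)$, $\beta \in O\left(N^K\right)$, $\phi \in O\left(N^{K+1}\right)$ into the expression from Lemma~\ref{lemma:nu}, identify the middle summand $(K+1)\beta^2\lambda \in O\left(KK!N^{2K+1}\right)$ as dominant, and verify that it dominates the other two via ratio/limit arguments using Stirling's approximation and Assumption~\ref{ass:max_k}. The only difference is cosmetic: your bound $O\left(K^2(K!)^2N^{K+3}\right)$ on the third summand is the one actually consistent with $\lambda\in O(NK!)$, whereas the paper substitutes $O(NK)$ for $\lambda$ in that term and obtains $O\left(K^4N^{K+3}\right)$ — but in either case the term is dominated and the conclusion is unchanged.
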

\begin{proof}
\begin{equation}
    \begin{aligned}
     \nu(m)&= (K+1)\left(\lambda(\lambda-1)\beta + \beta(\beta-1)\lambda + K\lambda^2 \phi\right) \left(1-e^{-\frac{m}{|\mathcal{T}|}}\right)^3\\
 &= \left( O(K) \cdot O(NK!)\cdot O(NK!) \cdot O(N^K)\right) \left(1-e^{-\frac{m}{|\mathcal{T}|}}\right)^3 \\
&+ \left(O(K) \cdot O(N^K) \cdot O(N^K) \cdot O(NK!)\right) \left(1-e^{-\frac{m}{|\mathcal{T}|}}\right)^3 \\ 
&+ \left(O(K) \cdot O(K)\cdot  O(NK) \cdot O(NK) \cdot O(N^{K-1}) \right) \left(1-e^{-\frac{m}{|\mathcal{T}|}}\right)^3 \\
&= O(K(K!)^2N^{K+2}) \left(1-e^{-\frac{m}{|\mathcal{T}|}}\right)^3 \\
&+ O(KK!N^{2K + 1}) \left(1-e^{-\frac{m}{|\mathcal{T}|}}\right)^3 \\ 
&+ O(K^4N^{K+1})  \left(1-e^{-\frac{m}{|\mathcal{T}|}}\right)^3 \\
    \end{aligned}
\end{equation}

We then show $O(KK!N^{2K + 1})$ dominates the other terms.

\begin{equation}
\begin{aligned}
\frac{K(K!)^2N^{K+2}}{KK!N^{2K + 1}} &=  \frac{K(K!)^2N^{K+2}}{KK!N^{K + 2}N^{K-1}} \\
&=  \frac{K!}{N^{K-1}} \\
&\leq \frac{K^K}{(2K)^{K-1}} \quad & (N \geq 2K) \\
&= \frac{K}{2^{K-1}} \\
&\to 0 \text{ as $K \to \infty$}.
\end{aligned}
\end{equation}
since the denominator (exponential growth) grows much faster than the numerator (linear growth). This implies $O(KK!N^{2K + 1})$ dominates $O(K(K!)^2N^{K+2})$. Applying the same reasoning for $O(K^4N^{K+1})$ we get

\begin{equation}
\begin{split}
 \frac{K^4N^{K+1}}{KK!N^{2K + 1}} &=  \frac{K^3N^{K+1}}{K!N^{K + 1}N^{K}}\\ 
 &=  \frac{K^3}{K!N^{K}}\\ 
 &\to 0,\\ 
\end{split}
\end{equation}
since the numerator is a polynomial of $K$ and the denominator grows factorially in $K$ (at least). This implies $O(KK!N^{2K + 1})$ dominates $O(K^4N^{K+3})$. Therefore,

\begin{equation}
\nu(N,K,m) \in O\left(KK!N^{2K + 1}\right) \left(1-e^{-\frac{m}{|\mathcal{T}|}}\right)^3.
\end{equation}
\end{proof}

We can now investigate the relationship between the asymptotic bounds for $\mathbb{E}[Q(m)]$ and $\nu(m)$. 
\begin{equation}
    \begin{aligned}
\frac{\mathbb{E}[Q(m)]}{\nu(m)} &= \frac{N(K+1)!(N-K)^K\left(1- e^{-\frac{m}{|\mathcal{T}|}} \right)^2}{KK!N^{2K + 1} \left(1-e^{-\frac{m}{|\mathcal{T}|}}\right)^3} \\
&= \frac{(K+1)(N-K)^K}{KN^{2K}\left(1-e^{-\frac{m}{|\mathcal{T}|}}\right)} \\ 
&= \frac{\left( \frac{N}{2}+1 \right)\left( \frac{N}{2} \right)^{N/2}}{\frac{N}{2}N^{N}\left(1-e^{-\frac{m}{|\mathcal{T}|}}\right)} &&(K \leq N / 2)\\ 
&= \frac{\left( \frac{N}{2}+1 \right)}{2^{N/2}\frac{N}{2}N^{N/2}\left(1-e^{-\frac{m}{|\mathcal{T}|}}\right)} \\ 
&= \frac{1 }{2^{N/2}N^{N/2}\left(1-e^{-\frac{m}{|\mathcal{T}|}}\right)} + \frac{1 }{2^{N/2 - 1}N^{N/2 + 1}\left(1-e^{-\frac{m}{|\mathcal{T}|}}\right)} \\
\end{aligned}
\end{equation}

As such, we find that the upper bound for $\nu(m)$ dominates the lower bound for $\mathbb{E}[Q(m)]$ as $N$ grows. Since that is the case, it is a known results that the bound $1 -  e^{-\mathbb{E}[Q(m)] + \nu(m) / 2}$ is vacuous. Thus, we investigate only the asymptotic lower bound of the term $\mathbb{E}[Q(m)]^2/(\nu(m) + \mathbb{E}[Q(m)])$.

\begin{lemma}[Asymptotic lower bound for exponent in Janson's inequalities]
\label{lemma:exp_lb}
Let $\Lambda(m)$ be the exponent in Janson's second inequality, then $\Lambda(m) \in \Omega\left(NK!\left( 1-\frac{K}{N} \right)^{2K}\left(1-e^{-\frac{m}{N^{K+1}}}\right)\right)$
    
\end{lemma}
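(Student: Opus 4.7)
The plan is to first reduce the claim to an asymptotic estimate of the ratio $\mathbb{E}[Q(m)]^2/\nu(m)$, and then plug in the bounds already established. The key observation is that the computation immediately preceding the lemma has, as a by-product, shown that $\nu(m)$ dominates $\mathbb{E}[Q(m)]$ as $N$ grows. Concretely, the ratio $\mathbb{E}[Q(m)]/\nu(m)$ was shown to vanish, which gives $\mathbb{E}[Q(m)] = o(\nu(m))$ and hence $\nu(m) + \mathbb{E}[Q(m)] \in O(\nu(m))$. Therefore there exists a constant $c$ such that, for $N$ large enough,
\begin{equation*}
\Lambda(m) = \frac{\mathbb{E}[Q(m)]^2}{\nu(m) + \mathbb{E}[Q(m)]} \;\geq\; \frac{1}{c}\cdot \frac{\mathbb{E}[Q(m)]^2}{\nu(m)},
\end{equation*}
which reduces the problem to bounding $\mathbb{E}[Q(m)]^2/\nu(m)$ from below.

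Next, I would substitute the asymptotic lower bound from Theorem~\ref{prop:exp_Q}, namely $\mathbb{E}[Q(m)] \in \Omega\!\left(N^{K+1}(K+1)!\left(1-\tfrac{K}{N}\right)^K \left(1-e^{-m/N^{K+1}}\right)^2\right)$, and the asymptotic upper bound $\nu(m) \in O\!\left(KK!\,N^{2K+1}(1-e^{-m/|\mathcal T|})^3\right)$ just established. Squaring the numerator and dividing yields, after cancellation of $N^{2K+1}$ in the denominator against $N^{2K+2}$ in the numerator and simplification of $((K+1)!)^2/(K\cdot K!) = (K+1)^2 K!/K$, a quantity of the form
\begin{equation*}
\Omega\!\left( \frac{N(K+1)^2 K!}{K}\left(1-\tfrac{K}{N}\right)^{2K} \left(1-e^{-m/N^{K+1}}\right)\right).
\end{equation*}
Since $(K+1)^2/K \geq 1$ for every $K \geq 1$, this constant factor can be absorbed into the $\Omega(\cdot)$, giving precisely $\Lambda(m) \in \Omega\!\left(NK!\left(1-\tfrac{K}{N}\right)^{2K}\left(1-e^{-m/N^{K+1}}\right)\right)$.

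The only non-mechanical step is justifying that the $\left(1-e^{-m/|\mathcal T|}\right)^3$ in the upper bound on $\nu(m)$ can indeed be replaced by $\left(1-e^{-m/N^{K+1}}\right)^3$ when propagating through the ratio; this follows from $|\mathcal T| \in O(N^{K+1})$ (Lemma~\ref{lemma:ub_T}), which implies $1-e^{-m/|\mathcal T|} \leq 1-e^{-m/N^{K+1}}$ up to an absorbable constant, so the cube in the denominator cancels two of the four probability factors coming from squaring $\mathbb{E}[Q(m)]$ and leaves exactly one such factor, as claimed. The main obstacle is really just bookkeeping: ensuring that each asymptotic symbol points in the right direction (upper bounds in denominators, lower bounds in numerators) so that the final $\Omega(\cdot)$ statement is valid; once that is in order, everything reduces to the algebra above.
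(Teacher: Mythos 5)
Your proposal is correct and follows essentially the same route as the paper's proof: drop the $\mathbb{E}[Q(m)]$ term in the denominator via the domination of $\nu(m)$, substitute the lower bound on $\mathbb{E}[Q(m)]$ and the upper bound on $\nu(m)$, cancel powers of $N$ and the probability factors, and absorb $(K+1)^2/K \geq 1$. One small slip in your justification of the last step: since $|\mathcal{T}| \leq N^{K+1}$ we have $m/|\mathcal{T}| \geq m/N^{K+1}$ and hence $1-e^{-m/|\mathcal{T}|} \geq 1-e^{-m/N^{K+1}}$ (not $\leq$ as you wrote), and the cube cancels three of the four probability factors, not two; fortunately this is exactly the direction needed to lower-bound the single surviving factor in the numerator, so the conclusion stands.
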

\begin{proof}
    \begin{equation}
        \begin{aligned}
\frac{\mathbb{E}[Q(m)]^2}{\nu(m) + \mathbb{E}[Q(m)]} &= \frac{\Omega\left( N(K+1)K!(N-K)^{K}\left( e^{-\frac{m}{|\mathcal{T}|}}-1 \right)^2 \right)^2}{O\left(KK!N^{2K+1}\left( 1-e^{-\frac{m}{|\mathcal{T}|}} \right)^3\right)} && \left(\nu(m) \text{ dominates } \mathbb{E}[Q(m)]\right)\\
&= \Omega\left(\frac{\left( N(K+1)K!(N-K)^{K}\left( e^{-\frac{m}{|\mathcal{T}|}}-1 \right)^2 \right)^2}{KK!N^{2K+1}\left( 1-e^{-\frac{m}{|\mathcal{T}|}} \right)^3}\right) \\
&= \Omega\left(\frac{\left( N(K+1)K!(N-K)^{K}\left( 1-e^{-\frac{m}{|\mathcal{T}|}} \right)^2 \right)^2}{KK!N^{2K+1}\left( 1-e^{-\frac{m}{|\mathcal{T}|}} \right)^3}\right) \\
&= \Omega\left(\frac{ N^2(K+1)^2(K!)^2(N-K)^{2K}\left( 1-e^{-\frac{m}{|\mathcal{T}|}} \right)^4}{KK!N^{2K+1}\left( 1-e^{-\frac{m}{|\mathcal{T}|}} \right)^3}\right) \\
&= \Omega\left(\frac{ (K+1)^2(K-1)!(N-K)^{2K}\left(1-e^{-\frac{m}{|\mathcal{T}|}}\right)}{N^{2K-1}}\right) \\
&= \Omega\left(\frac{ (K+1)^2(K-1)!N^{2K}\left( 1-\frac{K}{N} \right)^{2K}\left(1-e^{-\frac{m}{|\mathcal{T}|}}\right)}{N^{2K-1}}\right) \\
&= \Omega\left((K+1)^2(K-1)!N\left( 1-\frac{K}{N} \right)^{2K}\left(1-e^{-\frac{m}{|\mathcal{T}|}}\right)\right) \\
&= \Omega\left(NK!\left( 1-\frac{K}{N} \right)^{2K}\left(1-e^{-\frac{m}{|\mathcal{T}|}}\right)\right) &&((K+1)^2 \geq (K+1)K)\\
&= \Omega\left(NK!\left( 1-\frac{K}{N} \right)^{2K}\left(1-e^{-\frac{m}{N^{K+1}}}\right)\right) &&\left(\text{Lemma~\ref{lemma:ub_T}}\right)
\end{aligned}
\end{equation}
\end{proof}
The lower bound on the exponent gives the lower bound on the probability and Theorem~\ref{thm:proba_bound} is obtained by combining Lemma~\ref{lemma:exp_lb} with Theorem~\ref{thm:true_Q_proba}.

\subsection{Proof of Corollary~\ref{cor:coverage}}
\begin{proof}
Let $B_x(m)$ be the event that terminating state $x \in \mathcal{X}$ has a bound after $m$ uniformly collected trajectories and $\mathbb{I}$ be defined as follows:

$$
\mathbb{I}_{B_x(w)} = \begin{cases}
1 & \text{ if Event $B_x(m)$ occurs} \\
0 & \text{ otherwise}
\end{cases}
$$

We directly apply the definition of expectation with Theorem~\ref{thm:true_Q_proba}. 

\begin{equation}
    \begin{aligned}
    \mathbb{E}[\kappa(m)] &\geq \mathbb{E}\left[\sum_{x \in \mathcal{X}}\mathbb{I}_{B_x(m)}\right] & \\
        &= \sum_{x \in \mathcal{X}}\mathbb{E}\left[\mathbb{I}_{B_x(m)}\right] & \\
        &= \sum_{x \in \mathcal{X}}\mathbb{P}(B_x(m)) & \\
        &= \sum_{x \in \mathcal{X}}\mathbb{P}(Q(m) > 0) & \\
        &= \binom{N}{C}\mathbb{P}(Q(m) > 0) & \\
        &= \binom{N}{C}\mathbb{P}(Q(m) > 0) & \text{(Thm.~\ref{thm:true_Q_proba})}
    \end{aligned}
\end{equation}
\end{proof}

\subsection{Proof of Theorem~\ref{thm:oversampling}}
\label{app:proof_thm_oversampling}
\begin{proof}
    Let $\bar{\mathcal{X}}_\mathrm{UB}\subseteq \mathcal{X} $  be the set of observed terminating states (i.e. for which we know the reward) and let $\mathcal{X}_{\operatorname{UB}} := \mathcal{X} \setminus \bar{\mathcal{X}}_\mathrm{UB}$ denote the subset of unobserved terminating states. We assume we have an upper bound for every $x \in \mathcal{X}_\mathrm{UB}$. Then, given $x \in \mathcal{X}_{\operatorname{UB}}$, we have

\begin{equation}
    \begin{split}
  \tilde P (x) &\geq P(x) \\ 
  \frac{\operatorname{UB}(x)}{\tilde Z} &\geq \frac{R(x)}{Z} \\ 
   \operatorname{UB}(x) &\geq \frac{R(x)\tilde Z}{Z} \\ 
   &\geq R(x)\frac{Z + \sum_{x^\prime \in \mathcal{X}_{\operatorname{UB}}}\Delta(x^\prime)}{Z} \\ 
   &\geq R(x)\left(1+\frac{\sum_{x^\prime \in \mathcal{X}_\mathrm{UB}}\Delta(x^\prime)}{Z} \right) \\ 
  &= R(x) + R(x) \frac{\sum_{x^\prime \in \mathcal{X}_\mathrm{UB}}\Delta(x^\prime)}{Z} \\
  &= R(x) + P(x) \sum_{x^\prime \in \mathcal{X}_\mathrm{UB}}\Delta(x^\prime) \\
    \end{split}
\end{equation}
which implies
\begin{equation}
    \begin{split}
              \Delta(x) &\geq P(x) \sum_{x^\prime \in \mathcal{X}_\mathrm{UB}}\Delta(x^\prime)\\
  &= P(x) \Delta(x) + P(x) \sum_{x^\prime \in \mathcal{X}_\mathrm{UB} \setminus \{x\}}\Delta(x^\prime)\\
  (1-P(x))\Delta(x) &\geq P(x) \sum_{x^\prime \in \mathcal{X}_\mathrm{UB} \setminus \{x\}}\Delta(x^\prime)\\
  \frac{(1-P(x))}{P(x)}\Delta(x) &\geq \sum_{x^\prime \in \mathcal{X}_\mathrm{UB} \setminus \{x\}}\Delta(x^\prime)\\
    \end{split}
\end{equation}

\end{proof}

\section{Experiments on Synthetic Random Graphs}
\label{app:synth_graphs}
For synthetic instances using random Barabási–Albert (BA) and Erdős–Rényi (ER), we generate graphs with $N = 1000$ vertices. BA-graphs use preferential attachment as a mechanism to distribute the edges, whereas ER-graphs are more random, where each edge has the same probability of occurrence.  Thus, the two types of graphs model very different edge distribution (and thus reward distribution). BA-graphs are parameterized by an integer which determines how many neighbors a new node added to the graph should have. The probability of a node being a neighbor is proportional to the degree of this node. We choose $3$ as our parameter. For ER-graphs, we choose parameter value $0.005$, which is the probability that an edge between two vertices exists. This probability is constant for the entire ER-graph.

These parameters were chosen so as to create relatively small instances with a similar number of edges with different underlying reward distributions. The range of the average degree is $[5.894, 5.926]$ for BA-graphs and $[4.758, 5.108]$ for ER-graphs. We show the distribution of the edges across instances in Figure~\ref{fig:edge_dist}.

\begin{figure}[h]
    \centering
    \includegraphics[width=0.5\linewidth]{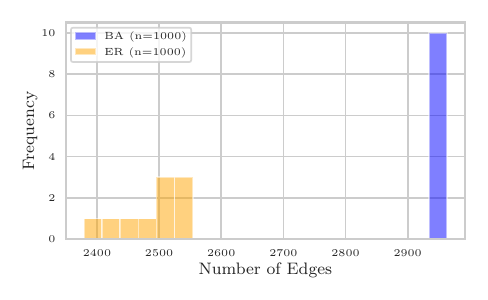}
    \caption{Distribution of the number of edges for the random graph instances.}
    \label{fig:edge_dist}
\end{figure}

\section{Additional Results}
\label{app:additional_results}
We present here additional results on the tasks presented on the main body of the paper, including $C \in \{5, 10\}$. For every figure, the dashed line, beginning at the circle ($\bigbullet$) marker, indicates the approaches are training offline (i.e. no more queries to $R$ are allowed). The diamond ($\blackdiamond$) indicates the end of the experiment.

\subsection{Random Graphs}
Figures~\ref{fig:random_5},\ref{fig:random_10} and \ref{fig:random_15} show the full results for the random graphs, including performance metrics (FCS, Loss curves, Top-100 Average Rewards) as well as the number of upper bounds generated by \subo{}.

\begin{figure}[H]
  \centering 
  \begin{subfigure}{0.33\textwidth}
    \includegraphics{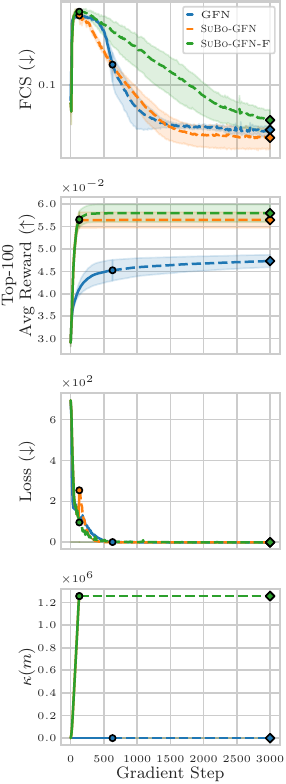}
    \caption{ER Graphs}
  \end{subfigure}
  \begin{subfigure}{0.33\textwidth}
    \includegraphics{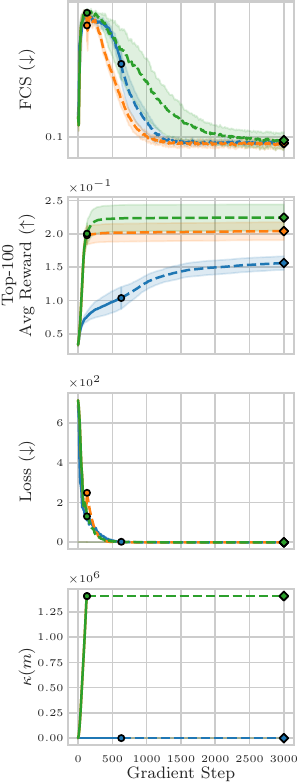}
    \caption{BA Graphs}
  \end{subfigure}
  \caption{FCS and Top-100 Average Reward of \subo{} and the classical GFN on random graphs with $C=5$.} \label{fig:random_5}
\end{figure}

\begin{figure}[H]
  \centering 
  \begin{subfigure}{0.33\textwidth}
    \includegraphics{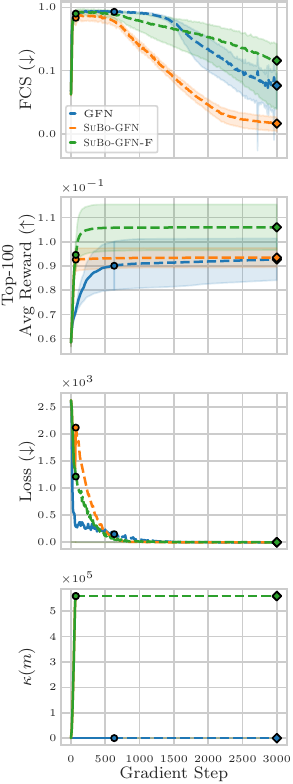}
    \caption{ER Graphs}
  \end{subfigure}
  \begin{subfigure}{0.33\textwidth}
    \includegraphics{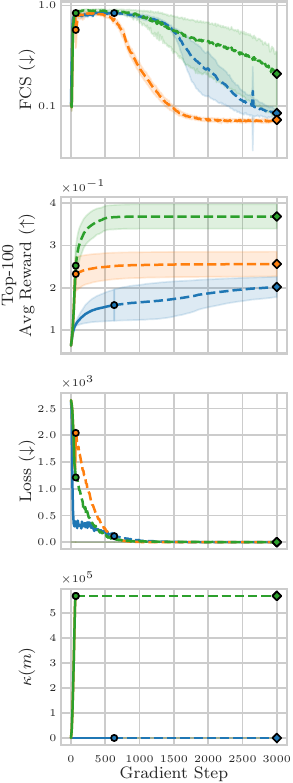}
    \caption{BA Graphs}
  \end{subfigure}
  \caption{FCS and Top-100 Average Reward of \subo{} and the classical GFN on random graphs with $C=10$.} \label{fig:random_10}
\end{figure}

\begin{figure}[H]
  \centering 
  \begin{subfigure}{0.33\textwidth}
    \includegraphics{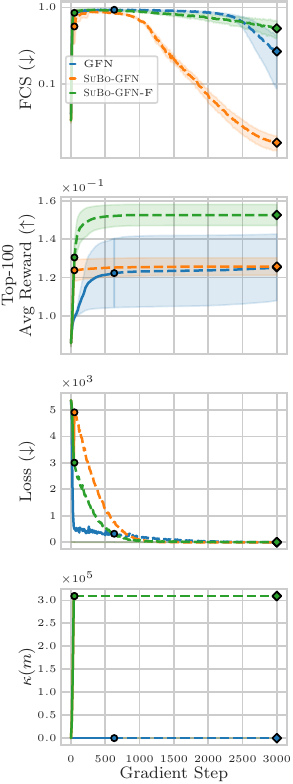}
    \caption{ER Graphs}
  \end{subfigure}
  \begin{subfigure}{0.33\textwidth}
    \includegraphics{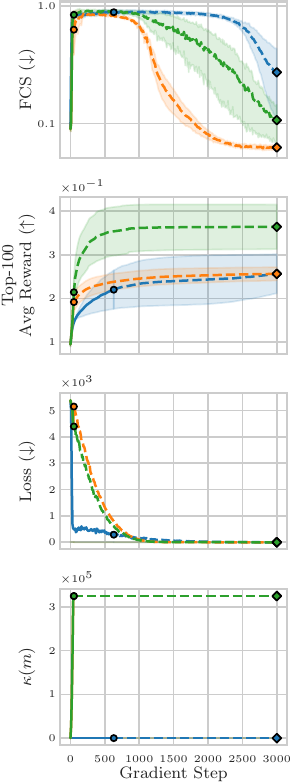}
    \caption{BA Graphs}
  \end{subfigure}
  \caption{FCS and Top-100 Average Reward of \subo{} and the classical GFN on random graphs with $C=15$.}
  \label{fig:random_15}
\end{figure}

\paragraph{Distribution matching and loss} We notice that in instances with $C=5$, the classical GFN is capable of distributing its flow somewhat efficiently, rivalling \subo{}. Across instances and values of $C$, \subof{} is more variable when it comes to FCS and generally struggles to improve it. As $C$ becomes larger, improvements in the loss seems to translate to improvements in the FCS only with \subo{}. That is, classical GFNs improve the loss very fast, but still do not have a small FCS when the loss is small. In constrast, the FCS curves for \subo{} goes down as the loss goes down. This is likely directly due to the sheer amount of data it can train on, thus avoiding overfitting on a restricted set of terminating states and trajectories. This is especially apparent on when the problem instances become larger (as $C$ increases in size), as the loss reduction slows down. As problems become more difficult (i.e. as $C$ grows, we see the classical GFN starting to struggle to reduce the FCS, whereas \subo{} improves it consistently across instances of varying difficulties. 

\paragraph{Top-100 Average Rewards}
In small instances, \subo{} and \subof{} significantly outperform classical GFNs. However, there seems to be a trend of the classical GFN catching up to \subo{} in terms of Top-100 Average Reward as $C$ increases. Still, \subo{}(\textsc{-F}) outperforms or rivals the classical GFN in Top-100 Average Reward. This indicates that leveraging bounds, specifically because they are numerous, seems to improve the discovery of high reward states.

\paragraph{Number of bounds} We note that \subo{} generates orders of magnitudes more training data than regular GFNs. 

\subsection{Real-World Graphs}
Figures~\ref{fig:real_5}, \ref{fig:real_10} and \ref{fig:real_15} show the full results for the random graphs, including performance metrics (FCS, Loss curves, Top-100 Average Rewards) as well as the number of upper bounds generated by \subo{}.
\begin{figure}[H]
  \centering 
  \begin{subfigure}{0.33\textwidth}
    \includegraphics{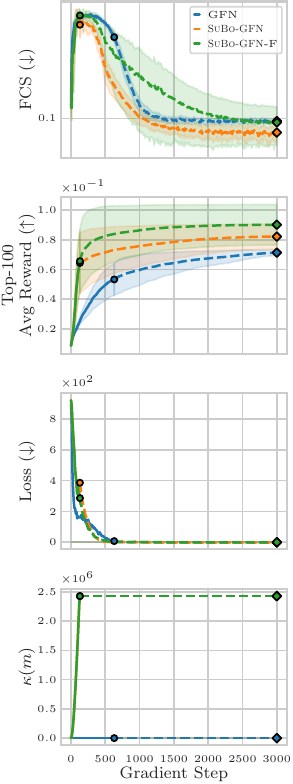}
    \caption{Cora}
  \end{subfigure}
    \begin{subfigure}{0.33\textwidth}
    \includegraphics{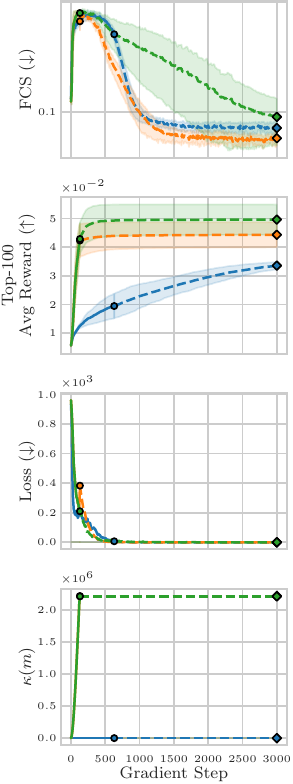}
    \caption{Citeseer}
  \end{subfigure}
  \begin{subfigure}{0.33\textwidth}
    \includegraphics{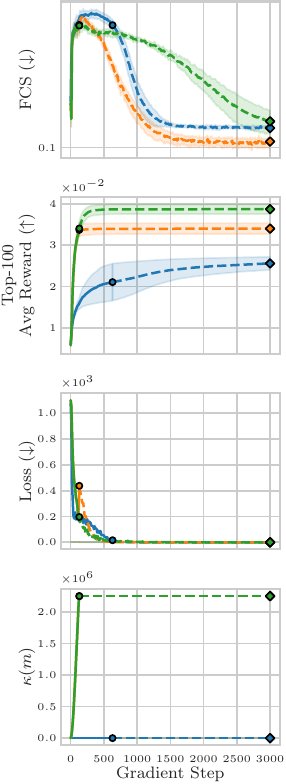}
    \caption{GrQc}
\end{subfigure}
  \caption{FCS and Top-100 Average Reward of \subo{} and the classical GFN on real-world graphs with $C=5$.}
  \label{fig:real_5}
\end{figure}

\begin{figure}[H]
  \centering 
  \begin{subfigure}{0.33\textwidth}
    \includegraphics{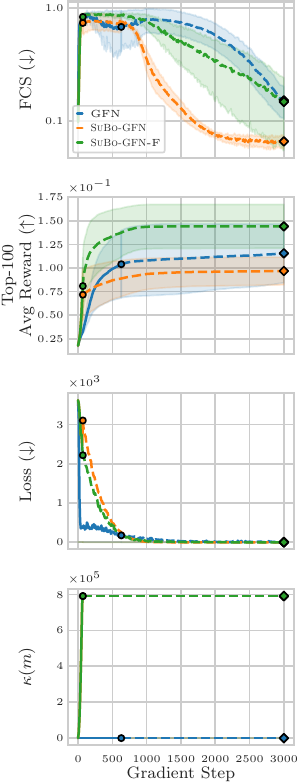}
    \caption{Cora}
  \end{subfigure}
    \begin{subfigure}{0.33\textwidth}
    \includegraphics{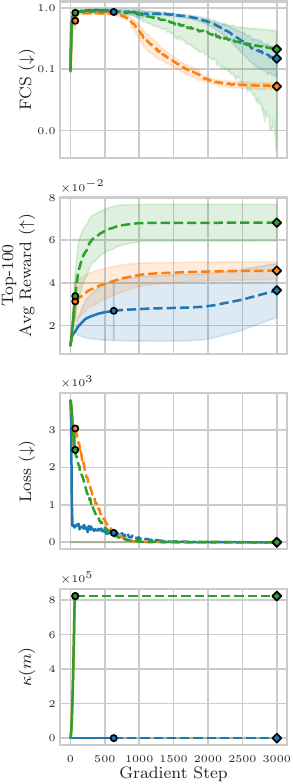}
    \caption{Citeseer}
  \end{subfigure}
  \begin{subfigure}{0.33\textwidth}
    \includegraphics{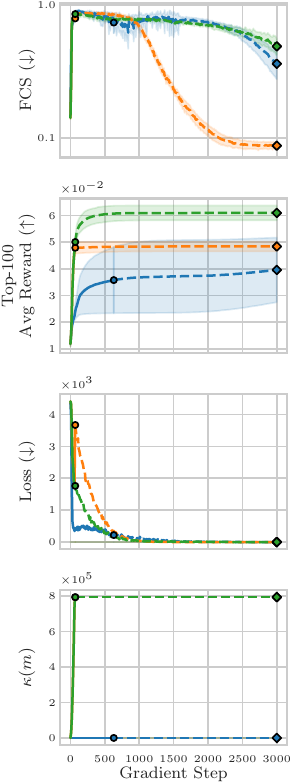}
    \caption{GrQc}
\end{subfigure}
  \caption{FCS and Top-100 Average Reward of \subo{} and the classical GFN on real-world graphs with $C=10$.}
  \label{fig:real_10}
\end{figure}

\begin{figure}[H]
  \centering 
  \begin{subfigure}{0.33\textwidth}
    \includegraphics{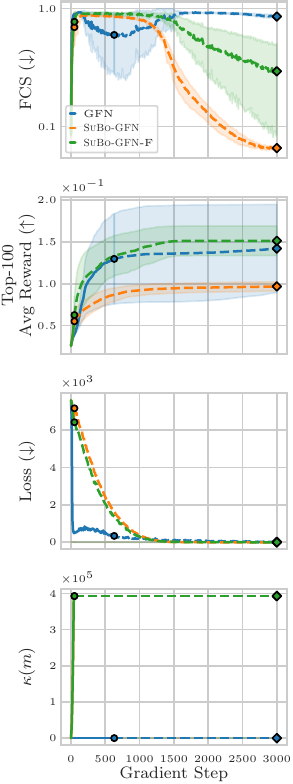}
    \caption{Cora}
  \end{subfigure}
    \begin{subfigure}{0.33\textwidth}
    \includegraphics{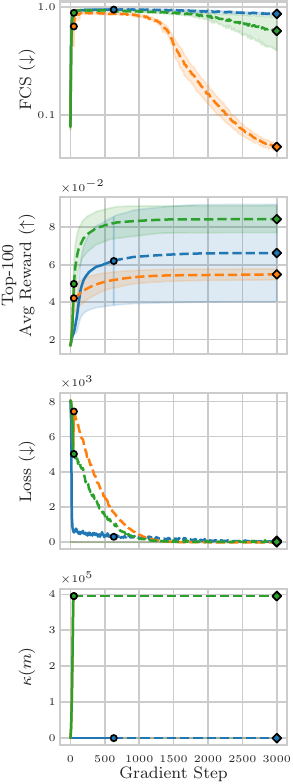}
    \caption{Citeseer}
  \end{subfigure}
  \begin{subfigure}{0.33\textwidth}
    \includegraphics{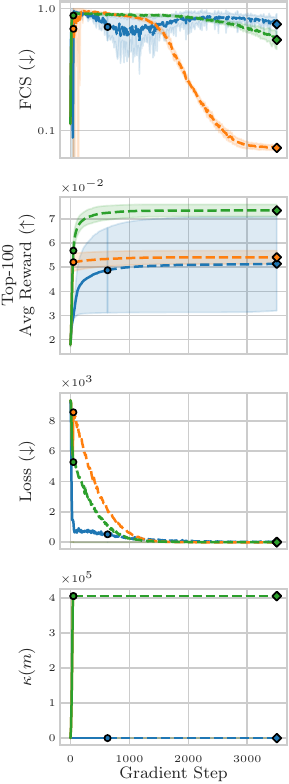}
    \caption{GrQc}
\end{subfigure}
  \caption{FCS and Top-100 Average Reward of \subo{} and the classical GFN on real-world graphs with $C=15$.}
  \label{fig:real_15}
\end{figure}

\paragraph{Distribution matching and loss} We notice similar behaviors as with random graphs. Specifically, classical GFNs seems to be able to rival \subo{} when $C$ is small in terms of FCS, but it struggles to maintain this performance as $C$ grows. \subof{} still exhibits large confidence intervals, indicating a lot of variability in its FCS. A small loss once again translates to a small FCS for \subo{} whereas the same cannot be said for other approaches when $C$ is larger.

\paragraph{Top-100 Average Rewards}
When $C=5$, \subo{} and \subof{} significantly outperform classical GFNs. However, there seems to be a trend of the classical GFN catching up to \subo{} in terms of Top-100 Average Reward as $C$ increases. Still, \subo{}(\textsc{-F}) outperforms or rivals the classical GFN in Top-100 Average Reward. This indicates that leveraging bounds, specifically because they are numerous, seems to improve the discovery of high reward states.

\paragraph{Number of bounds} We note that \subo{} generates orders of magnitudes more training data than regular GFNs.

\section{Implementation Details}
\label{app:impl_details}
We provide in Table~\ref{tab:hyperparameters} the parameters used for our experiments. For each batch of trajectories sampled with \subo{} or the classical GFN are stored in a replay buffer. For \subo{}, one epoch of training is done with the same batch size composed of a mix of trajectories generated from the upper bounds and from the replay buffer storing the sampled trajectories. We choose a mix of $25\%$ from the replay buffer and $75\%$ from trajectories sampled from the terminating states for which we have upper bounds available. As there are usually many upper bounds for a single $x \in \mathcal{X}$, we select the smallest (tightest) before swapping it with $R$, to avoid artificially adding noise to the learning process. For training, classical GFNs operate similarly, but all of the trajectories used in training are from the replay buffer instead. 

\begin{table}[ht]
\centering
\caption{Hyperparameter Configuration for GFN Graph Experiments}
\label{tab:hyperparameters}
\small
\begin{tabular}{ll p{6cm}}
\toprule
\textbf{Category} & \textbf{Hyperparameter} & \textbf{Value} \\ 
\midrule
\textbf{GFN Training} & Learning Objective & Trajectory Balance~\citep{malkinTrajectoryBalanceImproved2022} \\
 & Query Budget & 10,000 \\
 & Batch Size (sampling and offline) & 16 \\
 & Learning Rate ($P_F, P_B$) & $10^{-4}$ \\
 & Learning Rate ($Z$) & $10^{-2}$ \\
\midrule
\textbf{Architecture} & Model Type & Graph Isomorphism Network~\citep{xuHowPowerfulAre2018} \\
 & Embedding Dimension & 128 \\
 & Convolutional Layers & 1 \\
\midrule
\textbf{FCS} & Forward Trajectories & 128 \\
 & Backward Trajectories & 8 \\ 
 & Epochs & 25 \\
\bottomrule
\end{tabular}
\end{table}

\paragraph{Hardware Details} We run all of our experiments on H100 NVIDIA GPUs. The CPUs for these experiments vary, as multiple clusters were used. CPUs were either AMD EPYC 9454, Intel Xeon Gold 6448Y, Intel Xeon Platinum 8570.

\end{document}